\documentclass[preprint]{article}

\PassOptionsToPackage{round}{natbib}

\usepackage{neurips_2026}

\usepackage[utf8]{inputenc}
\usepackage[T1]{fontenc}
\usepackage[protrusion=true,expansion=true]{microtype}
\usepackage{hyperref}
\usepackage{amsmath}
\usepackage{amsfonts}
\usepackage[dvipsnames,svgnames,x11names]{xcolor}
\usepackage{booktabs}
\usepackage{graphicx}
\usepackage{tikz}
\usepackage{tabularx}
\usepackage{pifont}
\usepackage{silence}

\newcommand{\cmark}{\ding{51}}
\newcommand{\xmark}{\ding{55}}

\newcolumntype{Y}{>{\centering\arraybackslash}X}

\usepackage{amsthm}
\newtheorem{theorem}{Theorem}
\newtheorem{proposition}{Proposition}
\newtheorem{corollary}{Corollary}

\usetikzlibrary{arrows.meta,positioning,fit,calc,backgrounds}

\hypersetup{colorlinks=true,citecolor=blue,urlcolor=blue}

\title{Arrow: A Foundation Model for Causal Discovery}

\author{
Ryan Thompson\thanks{These authors contributed equally.} \\
University of Technology Sydney \\
\texttt{ryan.thompson-1@uts.edu.au} \\
\And
He Zhao\footnotemark[1] \\
CSIRO's Data61 \\
\texttt{he.zhao@csiro.au}
\AND
Daniel M. Steinberg \\
CSIRO's Data61 \\
\texttt{dan.steinberg@csiro.au}
\And
Edwin V. Bonilla \\
CSIRO's Data61 \\
\texttt{edwin.bonilla@csiro.au}
}

\begin{document}

\maketitle

\begin{abstract}

We introduce \texttt{Arrow}, a foundation model for zero-shot causal discovery on observational tabular data. \texttt{Arrow} factorizes a directed acyclic graph into an undirected skeleton and a topological order, guaranteeing acyclicity by construction. Given a new dataset, it uses a transformer-based architecture to contextualize variables within and across observations, then predicts skeleton edge probabilities and node order scores that together define a graph. \texttt{Arrow} is trained in a supervised fashion on synthetic datasets with ground-truth graphs, using an end-to-end differentiable directed edge composite likelihood induced by the skeleton--order factorization. The training distribution spans diverse graph families, functional forms, noise models, and dataset shapes. Across in- and out-of-distribution synthetic, semi-synthetic, and real datasets, \texttt{Arrow} matches or outperforms existing causal discovery methods at substantially lower inference cost than competitive alternatives. Our results demonstrate that large-scale pretraining on diverse synthetic data can yield zero-shot causal discovery models that are fast, accurate, and reusable on new datasets.

\end{abstract}

\section{Introduction}
\label{sec:introduction}

Causality lies at the heart of scientific inquiry, yet underlying causal structure is rarely observed directly. Recovering such structure from data is the goal of causal discovery, typically formulated through directed acyclic graphs (DAGs) that represent direct causal relations and the conditional independencies they entail \citep{Spirtes2001,Pearl2009}. Because causal structure is often unavailable in precisely the settings where it is most useful, causal discovery has become a central problem in machine learning, with applications spanning psychology \citep{Foster2010}, economics \citep{Imbens2020}, and epidemiology \citep{Tennant2021}, as well as numerous other domains.

Historically, progress in causal discovery has been driven by task-specific methods, which solve each dataset from scratch. These methods include conditional independence testing \citep{Spirtes2001,Kalisch2007}, graph or topological order search \citep{Chickering2002,Andrews2023,Duong2025}, and continuous optimization \citep{Zheng2018,Bello2022,Nazaret2024}. Although this line of work has produced many of the field's strongest methods, task-specific methods are compute intensive and rely on assumptions tailored to the dataset at hand.

Pretrained models have recently emerged as a promising alternative to task-specific methods \citep{Lorch2022,Ke2023,Dhir2025,Wu2025,Yin2025,Peng2026}. Rather than solving causal discovery anew for every dataset, they learn generalizable inference procedures from large sets of synthetic tasks, echoing recent tabular foundation models wherein synthetic pretraining has been shown to generalize to real data \citep{Hollmann2025,Qu2025}. For such an approach to yield a credible causal discovery foundation model, however, we believe three properties are necessary: (i) zero-shot inference; (ii) guaranteed DAG outputs; and (iii) broad pretraining diversity. As shown in Table~\ref{tab:capabilities}, no existing pretrained model satisfies all three.

Motivated by the prospect of fast, accurate, and reusable causal discovery models, we introduce \texttt{Arrow}, a foundation model for zero-shot causal discovery on observational tabular data. \texttt{Arrow} acts as a direct map from datasets to DAGs, producing a graph prediction in a single forward pass. Its design combines a skeleton--order factorization of DAGs that guarantees acyclic graph predictions, a transformer architecture that contextualizes variables within and across observations, and large-scale supervised pretraining on more than 100M unique synthetic causal discovery tasks with known ground-truth graphs, spanning datasets with up to $n=2000$ observations and $p=100$ variables.

Taken together, these design choices instantiate the foundation model perspective above: \texttt{Arrow} is trained once, applied zero-shot to new datasets, structured to output DAGs, and exposed during pretraining to a broad range of causal discovery tasks. The rest of the paper develops the necessary components, and shows that they yield a model that is fast, accurate, and practically reusable. Figure~\ref{fig:synthetic_accuracy_runtime} previews this central point: on out-of-distribution tasks, \texttt{Arrow} outperforms both pretrained and task-specific baselines, occupying the most favorable point on the accuracy--runtime frontier across multiple metrics. To make \texttt{Arrow} accessible to the community, we will soon publicly release its weights, so that researchers and practitioners can use the model without the cost of pretraining.

\begin{figure}[t]
\centering
\includegraphics[width=\linewidth]{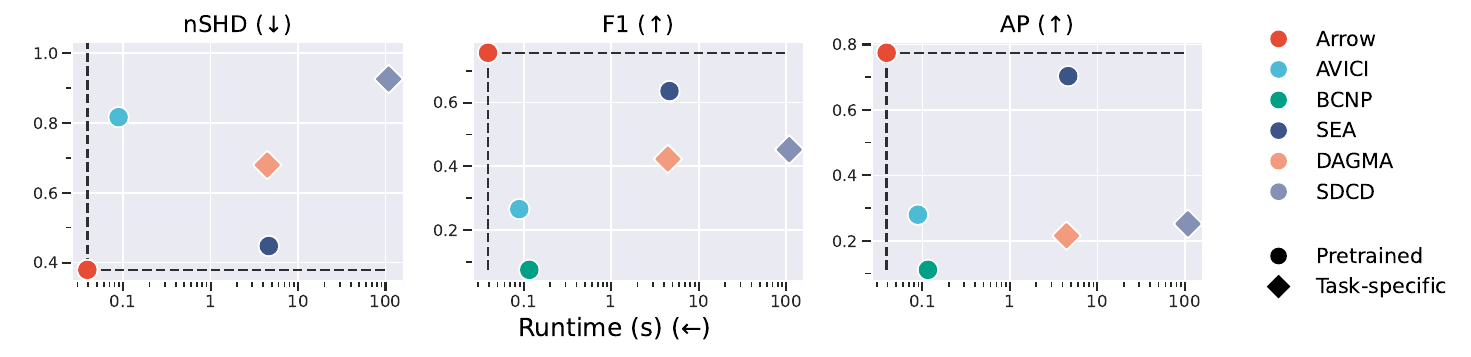}
\caption{Accuracy and runtime on out-of-distribution synthetic causal discovery tasks generated from small-world graphs, spline functions, and gamma noise, with $n\in\{100,\dots,2000\}$ observations and $p\in\{2,\dots,100\}$ variables. Dashed lines trace the Pareto frontier, marking non-dominated methods.}
\label{fig:synthetic_accuracy_runtime}
\end{figure}

\begin{table}[ht]
\centering
\small
\begingroup
\setlength{\tabcolsep}{0.1pt}
\caption{Comparison of pretrained causal discovery models. Zero-shot inference means graph prediction on new data without task-specific search, retraining, or optimization. DAG guarantee means acyclicity is enforced at inference time. Pretraining diversity means pretraining across graph families, functional forms, noise models, and dataset shapes to encourage generalization to unseen tasks. 
See Appendix~\ref{app:capabilities} for details.}
\label{tab:capabilities}
\begin{tabularx}{\linewidth}{@{}lYYYYYYY@{}}
\toprule
&
\shortstack{\texttt{AVICI}\\{\tiny\citep{Lorch2022}}}
& \shortstack{\texttt{CSIvA}\\{\tiny\citep{Ke2023}}}
& \shortstack{\texttt{BCNP}\\{\tiny\citep{Dhir2025}}}
& \shortstack{\texttt{SEA}\\{\tiny\citep{Wu2025}}}
& \shortstack{\texttt{ADAG}\\{\tiny\citep{Yin2025}}}
& \shortstack{\texttt{CauScale}\\{\tiny\citep{Peng2026}}}
& \shortstack{\texttt{Arrow}\\{\tiny(ours)}} \\
\midrule
Zero-shot inference
& \cmark & \cmark & \cmark & \xmark & \cmark & \cmark & \cmark \\
DAG guarantee
& \xmark & \xmark & \cmark & \xmark & \xmark & \xmark & \cmark \\
Pretraining diversity
& High & Low & Low & High & Low & Moderate & Very high \\
\bottomrule
\end{tabularx}
\endgroup
\end{table}

The rest of the paper is organized as follows. Section~\ref{sec:factored} defines the skeleton--order DAG factorization, derives exact and composite likelihoods, and gives a maximum-probability prediction rule. Section~\ref{sec:model} specifies the model architecture for mapping a tabular dataset to skeleton probabilities and order scores. Section~\ref{sec:pretraining} describes the synthetic task distribution and training scheme used to learn this map. Section~\ref{sec:experiments} then turns to evaluation, reporting results on in-distribution synthetic data and out-of-distribution synthetic, semi-synthetic, and real data. Finally, Section~\ref{sec:concluding} closes the paper.

\section{Likelihood formulation}
\label{sec:factored}

\subsection{Skeleton--order factorization}

\texttt{Arrow} represents a DAG through two simpler objects: an undirected skeleton described by an adjacency matrix $A\in\{0,1\}^{p\times p}$, and a topological order described by a permutation $\pi$ of $\{1,\dots,p\}$. Since the skeleton is undirected, the matrix $A$ is symmetric, with $A_{jk}=A_{kj}$ and $A_{jj}=0$, so the entry $A_{jk}$ indicates whether variables $j$ and $k$ are adjacent in the skeleton. The directions of the causal relations are determined by the order $\pi=(\pi_1,\dots,\pi_p)$ that ranks the nodes, where $\pi_j$ is the node appearing in position $j$. Given $\pi$, we define the order-consistency mask $M_{jk}(\pi)=1[j\prec_\pi k]$, where $j\prec_\pi k$ means that node $j$ appears before node $k$ in $\pi$. The DAG adjacency matrix is then
\begin{equation*}
G=f(A,\pi):=A\odot M(\pi),
\end{equation*}
where $\odot$ denotes elementwise multiplication. Thus, each skeleton edge is oriented from the earlier node in $\pi$ to the later node. Since all directed edges respect $\pi$, the matrix $G$ is acyclic by construction.

Proposition~\ref{prop:representation} formalizes the exactness of the factorization: any DAG can be obtained from its skeleton and any compatible topological order, and any pair that recovers the DAG must be of this form.
\begin{proposition}
\label{prop:representation}
Let $G^\star$ be any DAG on $p$ nodes, and let $A^\star$ denote its skeleton. Then $G^\star=f(A^\star,\pi)$ for every $\pi\in\operatorname{Top}(G^\star)$. Conversely, if $f(A,\pi)=G^\star$, then $A=A^\star$ and $\pi\in\operatorname{Top}(G^\star)$. Hence
\begin{equation*}
\{(A,\pi):f(A,\pi)=G^\star\}=\{(A^\star,\pi):\pi\in\operatorname{Top}(G^\star)\}.
\end{equation*}
\end{proposition}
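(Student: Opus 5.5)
The plan is to prove the two inclusions separately, working entrywise with the definition $G=f(A,\pi)=A\odot M(\pi)$. Throughout, I read $G^\star$ as a binary adjacency matrix with $G^\star_{jk}=1$ iff $j\to k$. Its skeleton is $A^\star_{jk}=G^\star_{jk}\vee G^\star_{kj}$; since a DAG has no 2-cycles, this equals $G^\star_{jk}+G^\star_{kj}$. Recall that $\pi\in\operatorname{Top}(G^\star)$ means $G^\star_{jk}=1$ implies $j\prec_\pi k$.

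\textbf{Forward direction.} Fix $\pi\in\operatorname{Top}(G^\star)$ and an ordered pair $(j,k)$. The diagonal case is trivial because $A^\star_{jj}=0=G^\star_{jj}$. For $j\neq k$, I split on the value of $G^\star_{jk}$.
\begin{itemize}
\item If $G^\star_{jk}=1$, then $A^\star_{jk}=1$, and topological compatibility gives $M_{jk}(\pi)=1$. Hence $f(A^\star,\pi)_{jk}=1$.
\item If $G^\star_{jk}=0$ and $G^\star_{kj}=0$, then $A^\star_{jk}=0$, so the product vanishes.
\item If $G^\star_{jk}=0$ and $G^\star_{kj}=1$, then $k\prec_\pi j$. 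Since $\prec_\pi$ is a strict total order, $M_{jk}(\pi)=0$, so the product vanishes.
\end{itemize}
In every case $f(A^\star,\pi)_{jk}=G^\star_{jk}$.

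\textbf{Converse.} Suppose $f(A,\pi)=G^\star$ for some symmetric $A$ with zero diagonal and some permutation $\pi$.
\begin{itemize}
\item \emph{$\pi$ is topological.} If $G^\star_{jk}=1$, then $A_{jk}M_{jk}(\pi)=1$, which forces $M_{jk}(\pi)=1$, i.e.\ $j\prec_\pi k$. So $\pi\in\operatorname{Top}(G^\star)$.
\item \emph{$A=A^\star$.} The key identity is $M(\pi)+M(\pi)^\top=\mathbf{1}\mathbf{1}^\top-I$, because for $j\neq k$ exactly one of $j\prec_\pi k$ and $k\prec_\pi j$ holds. Using the symmetry of $A$ and the zero diagonal of $A$,
\[
G^\star+G^{\star\top}=A\odot M(\pi)+A\odot M(\pi)^\top=A\odot(\mathbf{1}\mathbf{1}^\top-I)=A .
\]
The left-hand side is exactly $A^\star$, so $A=A^\star$.
\end{itemize}
The forward direction and the converse together give the two inclusions of the set identity.

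\textbf{Main obstacle.} There is no real difficulty here; the step needing care is the converse identification $A=A^\star$. It relies on three facts that should be stated explicitly: symmetry of $A$, its zero diagonal, and the totality of $\prec_\pi$. Without these, $A$ would only be pinned down on the entries selected by $M(\pi)$. I would state these facts at the outset, along with the convention that a DAG has no 2-cycles, so that the skeleton is given by a sum rather than a logical or.
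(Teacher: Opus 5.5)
Your proof is correct and follows essentially the same route as the paper: your forward case split is an entrywise version of the paper's observation that masking $A^\star$ by a compatible order reorients every skeleton edge exactly as in $G^\star$, and your converse identity $G^\star+G^{\star\top}=A\odot(\mathbf{1}\mathbf{1}^\top-I)=A$ is the matrix form of the paper's remark that $f(A,\pi)$ has the same undirected skeleton as $A$. Your version is slightly more careful than the paper's, since it states explicitly what the paper leaves implicit: the symmetry and zero diagonal of $A$, the totality of $\prec_\pi$, and the absence of 2-cycles in $G^\star$.
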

See Appendix~\ref{app:representation} for the proof. Here, $\operatorname{Top}(G)$ is the set of topological orders consistent with $G$.

\subsection{Induced conditional distribution}

We now define the conditional distribution $p_\theta(G\mid X)$ over DAGs induced by the skeleton--order factorization, where $X\in\mathbb{R}^{n\times p}$ denotes the input dataset and $\theta$ denotes the model parameters. \texttt{Arrow} parameterizes the skeleton and order components separately by assigning an edge probability to each unordered pair of nodes and a score to each node. For the skeleton, we use a product-Bernoulli distribution with edge probabilities $\nu_{jk}\equiv\nu_{jk}^\theta(X)\in(0,1)$ satisfying $\nu_{jk}=\nu_{kj}$ for $j\neq k$:
\begin{equation*}
p_\theta(A\mid X)=\prod_{j<k}\nu_{jk}^{A_{jk}}(1-\nu_{jk})^{1-A_{jk}}.
\end{equation*}
Thus, the edge probability $\nu_{jk}=\operatorname{Pr}_\theta(A_{jk}=1\mid X)$. For the order, we use a Plackett--Luce distribution \citep{Luce1959,Plackett1975} with node scores $s_j\equiv s_j^\theta(X)\in\mathbb{R}$ for $j=1,\dots,p$:
\begin{equation*}
p_\theta(\pi\mid X)=\prod_{j=1}^p\frac{\exp(s_{\pi_j})}{\sum_{k=j}^p\exp(s_{\pi_k})}.
\end{equation*}
Larger scores favor earlier positions, so $s_j>s_k$ makes node $j$ more likely to precede node $k$. We then form the joint conditional probability distribution over the skeleton and order as
\begin{equation*}
p_\theta(A,\pi \mid X)=p_\theta(A \mid X)p_\theta(\pi \mid X).
\end{equation*}
The deterministic construction $G=f(A,\pi)$ then induces a conditional distribution over DAGs:
\begin{equation*}
p_\theta(G\mid X)=\sum_{A,\pi}1[f(A,\pi)=G]p_\theta(A\mid X)p_\theta(\pi\mid X).
\end{equation*}
The distribution $p_\theta(G\mid X)$ is supported only on DAGs, because each skeleton is oriented according to a single order, so cycles cannot occur. Moreover, since $0<\nu_{jk}<1$, the support is the entire DAG space, because every DAG can be obtained from its skeleton and a compatible topological order.

\subsection{Exact and composite likelihoods}

For the dataset $X$, let $G^\star$ denote the corresponding ground-truth DAG, and let $A^\star$ be the skeleton of $G^\star$. Under the conditional probability model above, the exact likelihood assigned to the DAG $G^\star$ is
\begin{equation*}
p_\theta(G^\star\mid X)=\sum_{A,\pi}1[f(A,\pi)=G^\star]p_\theta(A\mid X)p_\theta(\pi\mid X).
\end{equation*}
By Proposition~\ref{prop:representation}, the event $f(A,\pi)=G^\star$ occurs exactly when $A=A^\star$ and $\pi\in\operatorname{Top}(G^\star)$, so
\begin{equation*}
p_\theta(G^\star\mid X)=p_\theta(A^\star\mid X)\sum_{\pi\in\operatorname{Top}(G^\star)}p_\theta(\pi\mid X).
\end{equation*}
Substituting the product-Bernoulli distribution for the skeleton gives an exact likelihood of the form
\begin{equation*}
p_\theta(G^\star \mid X)=\left[\prod_{j<k}\nu_{jk}^{A^\star_{jk}}(1-\nu_{jk})^{1-A^\star_{jk}}\right]\left[\sum_{\pi\in\operatorname{Top}(G^\star)}p_\theta(\pi\mid X)\right].
\end{equation*}
The second factor makes this likelihood intractable to train with in general, since it requires summing the Plackett--Luce probability mass over all topological orders consistent with the target DAG, a combinatorially large set whose cardinality is itself hard to count \citep{Brightwell1991}.

To avoid the intractable exact likelihood, we adopt a composite likelihood \citep[see, e.g.,][]{Varin2011} for training derived from directed edge marginals. For an ordered pair $(j,k)$ with $j\neq k$, write
\begin{equation*}
q_{jk}=\operatorname{Pr}_\theta(j\prec_\pi k\mid X)=\frac{\exp(s_j)}{\exp(s_j)+\exp(s_k)}=\sigma(s_j-s_k),
\end{equation*}
where $\sigma(\cdot)$ denotes the logistic sigmoid. These precedence probabilities satisfy $q_{jk}+q_{kj}=1$. Since $G=A\odot M(\pi)$, we have $G_{jk}=A_{jk}M_{jk}(\pi)$ and hence the probability of the directed edge $G_{jk}$ is
\begin{equation*}
\operatorname{Pr}_\theta(G_{jk}=1\mid X)=\operatorname{Pr}_\theta(A_{jk}=1,j\prec_\pi k\mid X)=\nu_{jk}q_{jk},
\end{equation*}
where the second equality uses the conditional independence of $A$ and $\pi$ given $X$. The composite likelihood now follows by taking the product of these directed edge marginals across ordered pairs:
\begin{equation*}
\tilde{p}_\theta(G^\star\mid X)=\prod_{j\neq k}\operatorname{Pr}_\theta(G_{jk}=G^\star_{jk}\mid X)=\prod_{j\neq k}(\nu_{jk}q_{jk})^{G^\star_{jk}}(1-\nu_{jk}q_{jk})^{1-G^\star_{jk}}.
\end{equation*}
Taking negative logs of the composite likelihood yields the negative log-likelihood used for training:
\begin{equation}
\label{eq:nll}
\mathcal{L}(\theta)=-\sum_{j\neq k}\left[G^\star_{jk}\log(\nu_{jk}q_{jk})+(1-G^\star_{jk})\log(1-\nu_{jk}q_{jk})\right].
\end{equation}
This composite negative log-likelihood replaces the exact version with lower-dimensional terms.

Theorem~\ref{thm:edge-marginal-identifiability} shows that the composite objective is not merely an edge-classification surrogate. Within the skeleton--order model class, the directed edge marginals identify both the product-Bernoulli skeleton distribution and the Plackett--Luce order distribution, and hence the full latent distribution.
\begin{theorem}
\label{thm:edge-marginal-identifiability}
Fix $X$ and suppose the directed edge marginals are generated by the factorization $r_{jk}:=\Pr_\theta(G_{jk}=1\mid X)=\nu_{jk}\sigma(s_j-s_k)$ for $j\neq k$, where $\nu_{jk}=\nu_{kj}\in(0,1)$. Then the directed edge marginals identify the skeleton edge probabilities and the order-score differences: $\nu_{jk}=r_{jk}+r_{kj}$ and $s_j-s_k=\log(r_{jk}/r_{kj})$. Consequently, the skeleton distribution $p_\theta(A\mid X)$ and the order distribution $p_\theta(\pi\mid X)$ are identified, with the scores $s_1,\dots,s_p$ identified up to an additive constant. Hence, the full latent distribution $p_\theta(A,\pi\mid X)=p_\theta(A\mid X)p_\theta(\pi\mid X)$ is identified.
\end{theorem}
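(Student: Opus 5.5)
The proof is a direct algebraic inversion of the map $(\nu,s)\mapsto r$, followed by the observation that both latent distributions depend on the parameters only through the recovered quantities. I would first fix an unordered pair $\{j,k\}$ with $j\neq k$ and write the two directed marginals $r_{jk}=\nu_{jk}\sigma(s_j-s_k)$ and $r_{kj}=\nu_{kj}\sigma(s_k-s_j)$. Adding them and using $\nu_{jk}=\nu_{kj}$ together with $\sigma(t)+\sigma(-t)=1$ (equivalently $q_{jk}+q_{kj}=1$, noted above) gives $r_{jk}+r_{kj}=\nu_{jk}$. Since $\nu_{jk}\in(0,1)$ and $\sigma$ takes values in $(0,1)$, both $r_{jk}$ and $r_{kj}$ are strictly positive. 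Dividing them, the common factor $\nu_{jk}$ cancels and gives $r_{jk}/r_{kj}=\sigma(t)/\sigma(-t)=e^{t}$ with $t=s_j-s_k$. Hence $s_j-s_k=\log(r_{jk}/r_{kj})$. This establishes the two displayed identities.

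Next I would deduce identification of the two latent distributions. The product-Bernoulli law $p_\theta(A\mid X)$ is a function only of the collection $\{\nu_{jk}\}_{j<k}$, which is now recovered from $r$, so it is identified. For the order, I would first show that all score differences fix the score vector up to a common additive constant. Concretely, setting $c=s_1$, every other score is $s_k=c-\log(r_{1k}/r_{k1})$. Then I would check that the Plackett--Luce probability is invariant under $s\mapsto s+c\mathbf{1}$: each factor $\exp(s_{\pi_j})/\sum_{k\ge j}\exp(s_{\pi_k})$ has $e^{c}$ cancelling between numerator and denominator. So $p_\theta(\pi\mid X)$ depends only on the differences and is identified. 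Finally, the joint $p_\theta(A,\pi\mid X)$ is the product of two identified factors, so it is identified as well.

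There is no real obstacle here; the only points needing care are bookkeeping ones. The first is the positivity of $r_{jk}$ and $r_{kj}$, which makes the logarithm well defined. The second is consistency of the recovered differences across triples: one needs $\log(r_{jk}/r_{kj})+\log(r_{kl}/r_{lk})=\log(r_{jl}/r_{lj})$. This holds automatically because $r$ is assumed to be generated by some $(\nu,s)$, so I would state it only as a remark. I would also note explicitly that identification is within the model class, meaning any two parameterizations producing the same $r$ yield the same $p_\theta(A,\pi\mid X)$.
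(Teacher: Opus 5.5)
Your proposal is correct and follows essentially the same route as the paper's proof. Both arguments recover $\nu_{jk}$ by summing the two directed marginals (via $\sigma(t)+\sigma(-t)=1$), recover $s_j-s_k$ from their ratio (via $\sigma(t)/\sigma(-t)=e^{t}$), and conclude from the shift invariance of the Plackett--Luce distribution and the product form of the joint; your positivity check on $r_{jk}$ and the explicit reconstruction $s_k=s_1-\log(r_{1k}/r_{k1})$ simply spell out details the paper leaves implicit.
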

The proof and further details are in Appendices~\ref{app:composite-theory} and \ref{app:composite-interpretation}. This result shows that the composite likelihood is structure-preserving within the skeleton--order family. In a general DAG model, pairwise edge marginals need not determine the joint graph distribution or global structure, since acyclicity induces higher-order dependencies. In contrast, the two directed edge marginals for each unordered pair encode undirected connectivity through their sum and relative order through their log-ratio. Thus, it retains the information needed to recover the full latent distribution under correct specification.

Importantly, Theorem~\ref{thm:edge-marginal-identifiability} establishes model-class identifiability, not observational causal identifiability. Recovering the true causal DAG from observational data still requires standard assumptions, including causal sufficiency, faithfulness, and additional conditions required to distinguish Markov-equivalent DAGs. Under misspecification, the composite likelihood targets the closest directed edge marginals representable by the skeleton--order model. Appendix~\ref{app:misspecification} shows that representability holds if and only if the pairwise directional preferences are globally consistent with a single latent ordering.

\subsection{Maximum-probability prediction}

At inference time, we predict the most probable skeleton--order pair $(\hat{A},\hat{\pi})$ under the joint distribution. Since the joint distribution satisfies $p_\theta(A,\pi\mid X)=p_\theta(A\mid X)p_\theta(\pi\mid X)$, it is maximized at
\begin{equation*}
(\hat{A},\hat{\pi})=\arg\max_{A,\pi}p_\theta(A,\pi\mid X)=\left(\arg\max_A p_\theta(A\mid X),\arg\max_\pi p_\theta(\pi\mid X)\right).
\end{equation*}
For the product-Bernoulli distribution, the most likely skeleton thresholds edge probabilities at 0.5 as $\hat{A}_{jk}=1[\nu_{jk}>0.5]$ for $j\neq k$ and $\hat{A}_{jj}=0$. For the Plackett--Luce distribution, the most likely order sorts node scores in descending order as $s_{\hat{\pi}_1}\geq s_{\hat{\pi}_2}\geq \cdots \geq s_{\hat{\pi}_p}$. Applying the order-consistency mask $M(\hat{\pi})$, where $M_{jk}(\hat{\pi})=1[j\prec_{\hat{\pi}} k]$, gives the prediction $\hat{G}=f(\hat{A},\hat{\pi})=\hat{A}\odot M(\hat{\pi})$. Thus, inference consists of thresholding the undirected edge probabilities to obtain the skeleton $\hat{A}$, sorting the node scores to obtain the order $\hat{\pi}$, and orienting $\hat{A}$ by $\hat{\pi}$ to obtain the final DAG $\hat{G}$. If thresholding recovers the true skeleton and the score order respects every true directed edge, then $\hat{A}=A^\star$ and $\hat{\pi}\in\operatorname{Top}(G^\star)$, and the maximum-probability prediction is exactly the true graph, i.e., $\hat{G}=G^\star$.

\section{Model architecture}
\label{sec:model}

\subsection{Overall structure}

We now describe \texttt{Arrow}'s architecture, which parameterizes the skeleton and order distributions. Given a dataset $X$, we compute one contextualized embedding $h_j\in\mathbb{R}^d$ for each variable $j=1,\dots,p$ in four stages: (1) projection of the scalar dataset entries, (2) interaction across variables within each observation, (3) aggregation across observations within each variable, and (4) global contextualization across the resulting variable embeddings. The latter three stages are implemented as transformer modules \citep{Vaswani2017}. Two prediction heads then map the variable embeddings to the quantities needed for graph prediction: a skeleton head produces the undirected edge probabilities $\nu_{jk}$, and an order head produces the node scores $s_j$. Figure~\ref{fig:arrow_architecture} graphically depicts the full architecture. Exact configuration details and the computational complexity of the architecture are given in Appendix~\ref{app:model}.

\begin{figure}[t]
\centering
\resizebox{\linewidth}{!}{\begin{tikzpicture}[
    font=\small,
    >={Triangle[length=2mm,width=2mm]},
    flow/.style={->, line width=1.0pt, draw=black!75, rounded corners=3pt},
    block/.style={
        draw,
        thick,
        rounded corners=2pt,
        align=center,
        inner sep=4pt,
        minimum height=1.28cm,
        text=black!75
    },
    group/.style={
        draw,
        rounded corners=2pt,
        inner sep=8pt
    }
]

\node[block, minimum width=2.9cm, fill=RoyalBlue!10, draw=RoyalBlue!70!black] (proj)
{
    \textbf{Scalar projection} \\
    \footnotesize Scalars to vectors
};

\node[block, minimum width=2.9cm, fill=RoyalBlue!8, draw=RoyalBlue!70!black, right=6mm of proj] (obs)
{
    \textbf{Obs. transformer} \\
    \footnotesize Within-obs. interactions
};

\node[block, minimum width=2.9cm, fill=RoyalBlue!10, draw=RoyalBlue!70!black, below=6mm of proj] (var)
{
    \textbf{Var. transformer} \\
    \footnotesize Within-var. aggregation
};

\node[block, minimum width=2.9cm, fill=RoyalBlue!10, draw=RoyalBlue!70!black, below=6mm of obs] (context)
{
    \textbf{Context transformer} \\
    \footnotesize Global contextualization
};

\coordinate (encW) at ($(proj.west)!0.5!(var.west)$);

\node[block, minimum width=2.9cm, fill=Black!10, draw=Black!70, anchor=east] (data) at ($(encW)+(-1cm,0)$)
{
    \textbf{Input dataset}
};

\node[block, minimum width=2.9cm, fill=Peach!10, draw=Peach!70!black, right=1.5cm of obs] (skel)
{
    \textbf{Skeleton head} \\
    \footnotesize Undirected edge probs.
};

\node[block, minimum width=2.9cm, fill=Peach!10, draw=Peach!70!black, below=6mm of skel] (order)
{
    \textbf{Order head} \\
    \footnotesize Node scores
};

\node[block, minimum width=2.9cm, fill=Maroon!10, draw=Maroon!70!black, right=1.5cm of skel] (train)
{
    \textbf{Training} \\
    \footnotesize Directed edge probs.
};

\node[block, minimum width=2.9cm, fill=Maroon!10, draw=Maroon!70!black, below=6mm of train] (infer)
{
    \textbf{Inference} \\
    \footnotesize Predicted DAG
};

\draw[flow] (data.east) -- ++(3.5mm,0) |- (proj.west);
\draw[flow] (proj) -- (obs);
\draw[flow] (obs.south) -- ++(0,-2mm) coordinate (obsbend) -- (obsbend -| var.center) -- (var.north);
\draw[flow] (var) -- (context);
\coordinate (split) at ($(context.east)+(0.7,0)$);
\draw[line width=1.0pt, draw=black!75] (context.east) -- (split);
\draw[flow] (split) |- (skel.west);
\draw[->, line width=1.0pt, draw=black!75] (split) |- (order.west);
\coordinate (midmerge) at ($($(skel.east)!0.5!(order.east)$)+(5mm,0)$);
\coordinate (phaseSplit) at ($(midmerge)+(5mm,0)$);
\draw[line width=1.0pt, draw=black!75, rounded corners=3pt] (skel.east) -- ++(5mm,0) |- (midmerge);
\draw[line width=1.0pt, draw=black!75, rounded corners=3pt] (order.east) -- ++(5mm,0) |- (midmerge);
\draw[line width=1.0pt, draw=black!75] (midmerge) -- (phaseSplit);
\draw[flow] (phaseSplit) |- (train.west);
\draw[flow] (phaseSplit) |- (infer.west);

\begin{pgfonlayer}{background}
    \node[group, fit=(proj)(obs)(var)(context), fill=RoyalBlue!3, draw=RoyalBlue!30] (encbox) {};
    \node[group, fit=(skel)(order), fill=Peach!3, draw=Peach!30] (headbox) {};
    \node[group, fit=(train)(infer), fill=Maroon!3, draw=Maroon!30] (phasebox) {};
\end{pgfonlayer}

\node[font=\bfseries\small, text=black!70]
    at ($(encbox.north)+(0,0.32)$) {Variable embeddings};

\node[font=\bfseries\small, text=black!70]
    at ($(headbox.north)+(0,0.32)$) {Prediction heads};

\node[font=\bfseries\small, text=black!70]
    at ($(phasebox.north)+(0,0.32)$) {Output};

\end{tikzpicture}}
\caption{Architecture of \texttt{Arrow}. The model embeds an input dataset via scalar projection and three transformer stages for within-observation variable interaction, within-variable observation aggregation, and global contextualization. Skeleton and order heads predict undirected edge probabilities and node scores, yielding directed edge probabilities for training and predicted DAGs for inference.}
\label{fig:arrow_architecture}
\end{figure}

\texttt{Arrow} respects the exchangeability of i.i.d.\ tabular data because the architecture uses no positional encodings, and attention without positional encodings is permutation equivariant \citep{Lee2019}. In particular, because it aggregates over observations while retaining variable-indexed representations, its output is invariant to observation order and equivariant to variable order. Thus, permuting observations leaves the output unchanged, while permuting variables relabels the outputs accordingly.

\subsection{Variable embeddings}

Given $X$, the encoder eventually produces one embedding $h_j$ for each variable $j$. The first stage is a projection that maps each scalar dataset entry $x_{ij}$ to the model dimension using a shared linear layer:
\begin{equation*}
z_{ij}=W_{\mathrm{proj}}x_{ij}+b_{\mathrm{proj}},\qquad z_{ij}\in\mathbb{R}^d.
\end{equation*}
For each observation $i$, we collect the projected entries into a sequence $Z_i=[z_{i1},\dots,z_{ip}]\in\mathbb{R}^{p\times d}$. The observation transformer $T_{\mathrm{obs}}$ then applies self-attention across variables within this sequence:
\begin{equation*}
R_i=T_{\mathrm{obs}}(Z_i)\in\mathbb{R}^{p\times d}.
\end{equation*}
This stage allows each variable representation within an observation to interact with the other variables in the same observation. The variable transformer $T_{\mathrm{var}}$ then aggregates observations for each variable. For variable $j$, let $R_{:j}=[r_{1j},\dots,r_{nj}]\in\mathbb{R}^{n\times d}$ denote its observation-wise representations. We introduce $m$ learned summary tokens $S\in\mathbb{R}^{m\times d}$ and update them by cross-attending to $R_{:j}$:
\begin{equation*}
C_j=T_{\mathrm{var}}(S,R_{:j})\in\mathbb{R}^{m\times d}.
\end{equation*}
The summary-token queries have length $m$, so cross-attention to the $n$ observation-wise representations costs $O(mn)$ per variable, avoiding the $O(n^2)$ cost of full self-attention across observations. The updated summary tokens are concatenated and merged to obtain a single vector for variable $j$:
\begin{equation*}
u_j=W_{\mathrm{merge}}\mathrm{vec}(C_j)+b_{\mathrm{merge}},\qquad u_j\in\mathbb{R}^d.
\end{equation*}
The merged representation has fixed dimension independent of the number of observations $n$. Finally, after observation aggregation, the context transformer is applied across the variable representations:
\begin{equation*}
[h_1,\dots,h_p]=T_{\mathrm{context}}([u_1,\dots,u_p])\in\mathbb{R}^{p\times d}.
\end{equation*}
This last stage lets each embedding incorporate information from all other embeddings, producing the final contextualized variable embeddings used by the skeleton head and the order head.

\subsection{Prediction heads}

Given the contextualized variable embeddings $h_1,\dots,h_p$, two prediction heads produce the quantities required by the factorized graph distribution: skeleton probabilities $\nu_{jk}$ and order scores $s_j$. The skeleton head is implemented as an MLP $g_{\mathrm{skel}}:\mathbb{R}^{2d}\to\mathbb{R}$ applied to pairs of embeddings, while the order head is implemented as a linear map $g_{\mathrm{ord}}:\mathbb{R}^d\to\mathbb{R}$ applied to individual embeddings. Because the skeleton is undirected, we require $\nu_{jk}=\nu_{kj}$ and hence use Janossy pooling \citep{Murphy2019} over the two possible orderings of each pair. For each ordered pair $(j,k)$ with $j\neq k$, we form the concatenated representation $[h_j \,\|\, h_k]\in\mathbb{R}^{2d}$ and compute the undirected edge probability $\nu_{jk}$ as
\begin{equation*}
\nu_{jk}=\sigma(e_{jk}),\qquad e_{jk}=\frac{1}{2}\left(\tilde{e}_{jk}+\tilde{e}_{kj}\right),\qquad \tilde{e}_{jk}=g_{\mathrm{skel}}([h_j\,\|\,h_k]).
\end{equation*}
This formulation is order invariant because it averages over the two pairs, so the undirected edge probabilities $\nu_{jk}=\nu_{kj}$. For each variable $j$, the order head computes the node score as
$s_j=g_{\mathrm{ord}}(h_j)$.

\section{Pretraining scheme}
\label{sec:pretraining}

\subsection{Data generation}
\label{sec:data-generation}

\texttt{Arrow}'s training data constitutes synthetic causal tasks of varying graph family, functional form, noise model, and dataset shape. For each task, we sample the number of observations and variables as $n\sim\operatorname{Unif}\{100,\dots,2000\}$ and $p\sim\operatorname{Unif}\{2,\dots,100\}$. Next, we choose between the Erdős--Rényi \citep{Erdos1959} and scale-free \citep{Barabasi1999} families of graphs with equal probability, and then sample the number of edges $s$ uniformly from $\{0,\dots,4p\}$, allowing edge density to vary across tasks. The Erdős--Rényi model samples an approximately homogeneous random graph, while the scale-free model samples a static power-law graph with more heterogeneous degrees. To transform these undirected graphs into DAGs, we sample a random permutation of the nodes and orient every undirected edge from the earlier node to the later node in that permutation.

Conditioned on the sampled DAG $G^\star$, observations are generated by sampling a structural equation model \citep[SEM,][]{Pearl2009}. The SEM family is chosen as a linear model or a nonlinear MLP model with equal probability. Variables are generated in topological order and standardized immediately after generation, which stabilizes the data-generating process and avoids sortability artifacts that can otherwise arise in synthetic data \citep{Ormaniec2025}. In the linear case, the SEM is of the form $x_j=w_j^\top x+\varepsilon_j$, where $w_j\in\mathbb{R}^p$ is a sparse weight vector with nonzero entries only at coordinates corresponding to node $j$'s parents. In the nonlinear case, the SEM is of the form $x_j=c_j^\top \phi(W_j^\top x)+\varepsilon_j$, where $c_j\in\mathbb{R}^h$ is the output-layer weight vector and $W_j\in\mathbb{R}^{p\times h}$ is a sparse hidden-layer weight matrix with nonzero rows only at coordinates corresponding to parents of node $j$. The hidden dimension $h$ is sampled from $\{1,\dots,64\}$ and the activation function $\phi$ is sampled from the set $\{\tanh,\mathrm{hardtanh},\mathrm{sigmoid},\mathrm{hardsigmoid}\}$. In both SEMs, weights are sampled independently with random signs and magnitudes drawn from $[1-a_w,1+a_w]$, where $a_w \sim \operatorname{Unif}(0,0.9)$.

The noise model is also randomized. With equal probability, we use either a homogeneous setting, where all nodes share the same noise distribution, or a heterogeneous setting, where nodes can have different noise distributions. Noise distributions are sampled from $\operatorname{Norm}(0,1)$, $\operatorname{Unif}(-1,1)$, and $\operatorname{Beta}(\alpha,\beta)$, where $\alpha,\beta\sim\operatorname{Unif}(1,10)$. These choices cover canonical unbounded and bounded noise families, with the beta family adding flexible asymmetry. To vary signal-to-noise ratios, we sample a task-level beta distribution with shape parameters drawn uniformly from $[1,10]$. For each non-root node, we sample a target local $R^2$ in $[0.1,0.9]$ from this distribution and choose a scaling factor so that the variance of the signal relative to the variance of the scaled noise attains this target.

\subsection{Optimization}

\texttt{Arrow} is trained end-to-end by minimizing the composite negative log-likelihood in Section~\ref{sec:factored} with AdamW \citep{Loshchilov2019}. The optimizer is run for $250{,}000$ iterations with a batch size of $2048$. Because the task generator is effectively unlimited, we do not train on a fixed collection of simulated datasets. Instead, each iteration samples a dataset shape $(n,p)$ and draws a fresh batch of newly generated tasks from the generator. This approach prioritizes coverage of the task distribution and targets the expected loss under the generator, consistent with stochastic optimization and large-scale training practice \citep{Bottou2018,Kaplan2020,Hoffmann2022}. Appendix~\ref{app:pretraining} elaborates on this task-streaming pretraining strategy and provides additional computational details.

\section{Related work}

The prior work most relevant to \texttt{Arrow} falls into two broad categories, distinguished by how they handle new datasets. Task-specific methods solve a fresh causal discovery problem from scratch, while pretrained models use a reusable predictor learned from many different causal discovery tasks.

Among task-specific methods, recent work formulates causal discovery as continuous optimization over a weighted adjacency matrix subject to an acyclicity constraint. \citet{Zheng2018} introduced a smooth differentiable characterization of acyclicity. Later work adapted this framework to nonlinear models \citep{Yu2019,Zheng2020}, likelihood-based objectives \citep{Ng2020}, and alternative acyclicity formulations, including \texttt{DAGMA} \citep{Bello2022} and \texttt{SDCD} \citep{Nazaret2024}. Other task-specific approaches include score- and order-based methods that search over graphs or topological orders, such as \citet{Andrews2023} and \citet{Duong2025}, as well as test-based approaches that infer adjacencies directly \citep{Amin2024}. A related line of task-specific methods learns posterior distributions over graphs \citep{Lorch2021,Charpentier2022,Bonilla2024,Thompson2025}. Although task-specific methods have driven much of the progress in the field, they must be re-solved on every dataset, which can be computationally costly and prevents knowledge accumulated across tasks from transferring to new datasets.

In terms of pretrained models, \texttt{AVICI} \citep{Lorch2022} and related early approaches \citep{Ke2023,Petersen2023} showed that neural predictors can produce causal graph predictions directly from data, though with limited pretraining diversity and without acyclicity guarantees. Subsequent work expanded the setting in complementary directions: \texttt{BCNP} uses Bayesian meta-learning with posterior sampling over DAGs \citep{Dhir2025}, \texttt{SEA} aggregates task-specific method estimates from sampled variable subsets \citep{Wu2025}, \texttt{ADAG} uses unsupervised pretraining for linear SEMs \citep{Yin2025}, and \texttt{CauScale} scales to larger graphs without explicitly enforcing acyclicity \citep{Peng2026}. Other pretrained approaches incorporate auxiliary graph knowledge \citep{Xu2026} or adapt frozen tabular foundation model embeddings \citep{Swelam2025}. As a collective, these works provide early signs of promise, but still do not jointly address zero-shot inference, guaranteed DAG outputs, and broad pretraining diversity. \texttt{Arrow} addresses this gap through zero-shot graph prediction, a skeleton--order-induced DAG guarantee, and diverse supervised pretraining.

\section{Experiments}
\label{sec:experiments}

\subsection{Baselines and metrics}

We evaluate \texttt{Arrow} against \texttt{AVICI} \citep{Lorch2022}, \texttt{BCNP} \citep{Dhir2025}, \texttt{SEA} \citep{Wu2025}, \texttt{DAGMA} \citep{Bello2022}, and \texttt{SDCD} \citep{Nazaret2024}. The first three baselines are pretrained models, while the latter two are strong task-specific methods. We use the publicly released checkpoints for the pretrained models. We do not include \texttt{ADAG} \citep{Yin2025} or \texttt{CauScale} \citep{Peng2026} in these evaluations, as neither currently has public code or checkpoints available. As performance metrics, we report the normalized structural Hamming distance (nSHD), F1 score (F1), average precision (AP), and runtime in seconds. nSHD is SHD normalized by the SHD between the empty graph and the ground-truth graph \citep{Yang2022}, making it comparable across graphs with different ground-truth edge counts. Appendix~\ref{app:setup} details the compute setup and software sources.

\subsection{In-distribution data}
\label{sec:in_distribution}

We first evaluate on newly generated synthetic datasets sampled from the data-generating distribution used in pretraining, which is representative of standard synthetic designs in the causal discovery literature. Here, we study how performance changes with the dataset shape. In each setting, we draw 100 datasets, with the graph family, functional form, and noise model randomized across datasets. Appendix~\ref{app:in_distribution} presents additional experiments that vary the other generator components individually.

Figure~\ref{fig:synthetic} reports two sets of results, one varying the number of observations with the number of variables randomized, and one varying the number of variables with the number of observations randomized. These results indicate that \texttt{Arrow} maintains strong performance across dataset shapes. Increasing the number of observations improves graph recovery, with nSHD decreasing and F1 and AP increasing before the gains begin to saturate at the largest sample sizes. Increasing the number of variables naturally makes graph recovery harder, but \texttt{Arrow} remains the strongest method overall and is consistently best on nSHD, F1, and AP. \texttt{SEA} is the closest competitor in several cases, particularly for nSHD, but \texttt{Arrow} is roughly 100--1000× faster. Though both models are pretrained, \texttt{Arrow} predicts graphs zero-shot, whereas \texttt{SEA} relies on outputs from task-specific methods for inference.

\begin{figure}[ht]
\centering
\includegraphics[width=\linewidth]{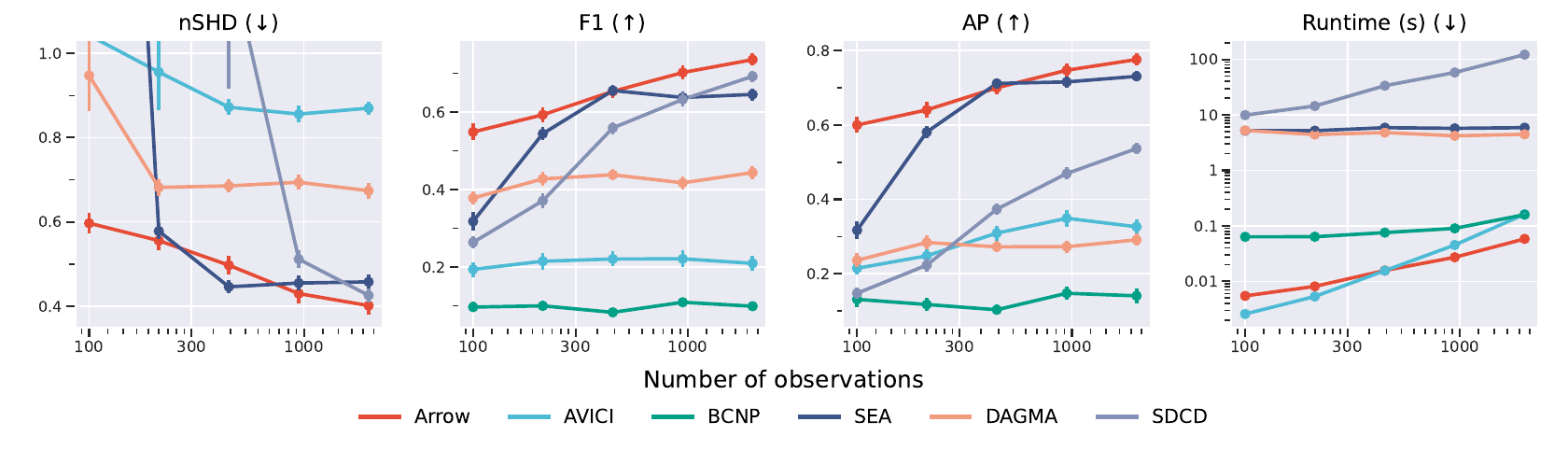}
\includegraphics[width=\linewidth]{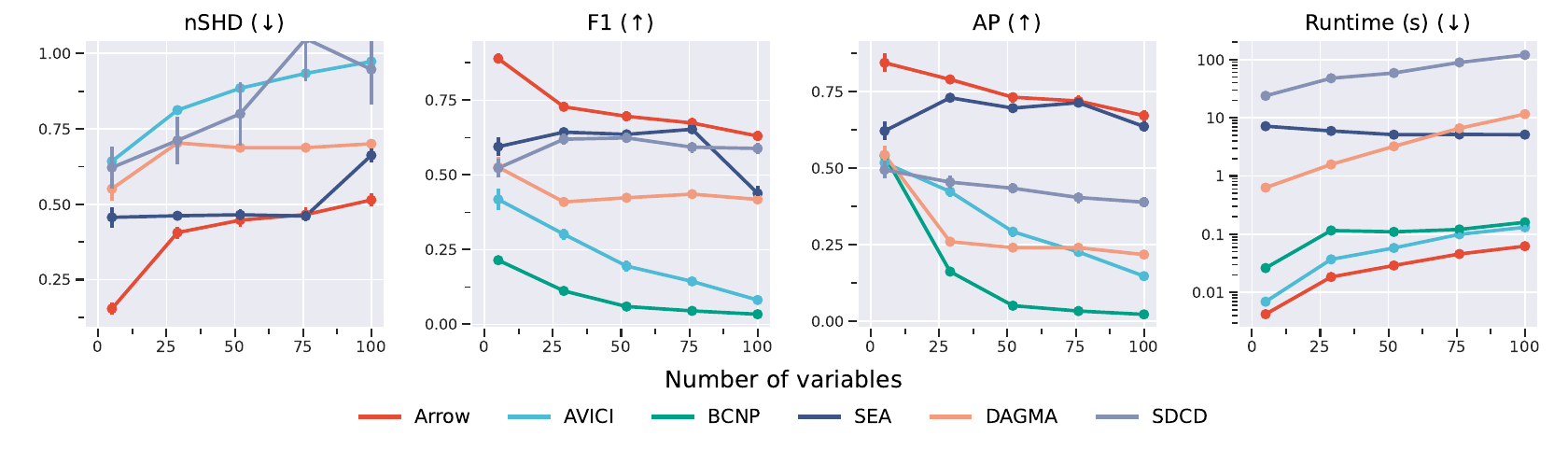}
\caption{Performance as a function of the number of observations and variables on in-distribution synthetic datasets. The averages (solid points) and standard errors (error bars) summarize results over 100 datasets. The nSHD axis is truncated at 1 to preserve resolution among stronger baselines.}
\label{fig:synthetic}
\end{figure}

\subsection{Out-of-distribution data}

To test out-of-distribution generalization in a controlled synthetic setting, we modify the data generator along three core axes: graph family, functional form, and noise model, replacing the in-distribution choices with small-world graphs \citep{Watts1998}, spline functions, and gamma noise, respectively. These shifts introduce different types of structure unseen during training. Figure~\ref{fig:synthetic_ood} shows strong performance under the shifts in the graph family and noise model, with only modest degradation relative to no shift, while the shift in functional form produces the most noticeable change. Under the simultaneous shift, \texttt{Arrow} continues to produce relatively strong graph predictions with sub-second inference, indicating robustness beyond the exact data generator used for pretraining.

\begin{figure}[ht]
\centering
\includegraphics[width=\linewidth]{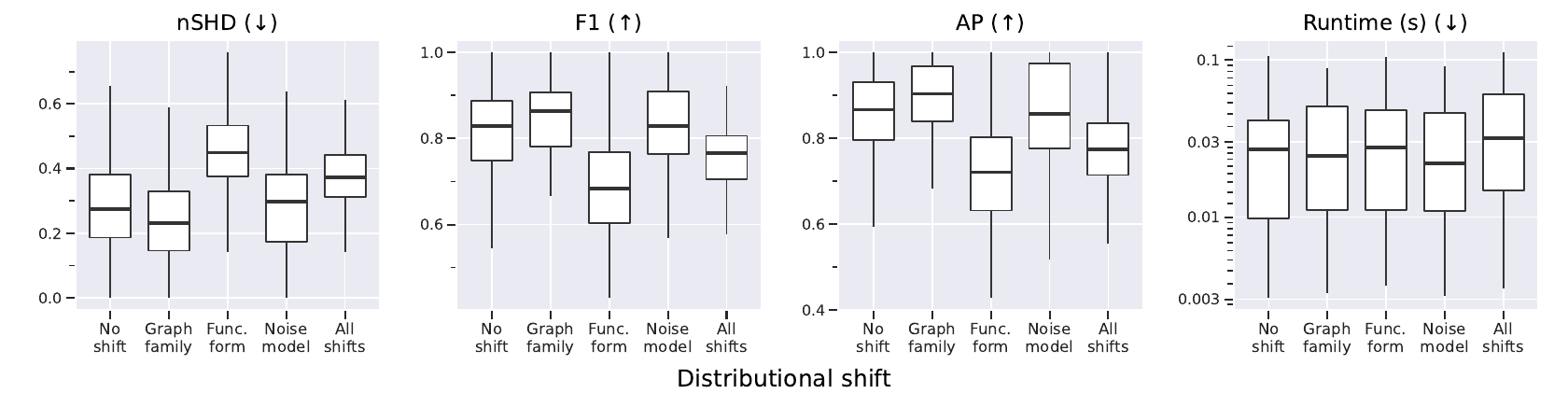}
\caption{Performance as a function of distribution shifts on synthetic datasets. Each shift modifies one aspect of the data-generating process relative to the training distribution by replacing the graph family, functional form, or noise model. The boxplots summarize results over 100 datasets.}
\label{fig:synthetic_ood}
\end{figure}

Beyond the fully synthetic setting, we evaluate on semi-synthetic datasets generated from four real-world networks in the Bayesian Network Repository, with sizes ranging from $p=46$ to $p=107$ variables and $s=66$ to $s=150$ edges. Table~\ref{tab:bayesian} shows that \texttt{Arrow} generalizes well in this setting, attaining the best nSHD and AP on all four networks. \texttt{Arrow} is also competitive in F1, with the best result on ECOLI70 and the larger ARTH150, and a near-best result on MAGIC-IRRI. Across all networks, \texttt{Arrow} remains substantially faster than task-specific methods and also faster than some pretrained models. These results suggest that the performance gains observed above carry over to semi-synthetic benchmarks with real-world graph structures and independently specified generators.

Appendix~\ref{app:out_of_distribution} reports results on the real flow cytometry dataset of \citet{Sachs2005} and the synthetic benchmark datasets of \citet{Herman2025}. These experiments complement the main out-of-distribution evaluations by testing \texttt{Arrow} on benchmarks with distinct distributional characteristics, showing that it remains competitive beyond the primary synthetic and semi-synthetic settings.

\begin{table}[ht]
\centering
\caption{Performance on out-of-distribution semi-synthetic datasets from the Bayesian Network Repository. The averages and standard errors summarize results over 100 datasets. Bold indicates the best result per metric, including ties whose standard error intervals overlap with the best entry.}
\label{tab:bayesian}
\scriptsize
\begingroup
\setlength{\tabcolsep}{3.6pt}
\begin{minipage}[t]{0.49\textwidth}
\centering
\vspace{0.3em}ECOLI70 ($n=100$, $p=46$, $s=70$) \par\vspace{0.3em}
\begin{tabular}{lp{0.47in}p{0.47in}p{0.47in}p{0.47in}}
\toprule
 & nSHD (↓) & F1 (↑) & AP (↑) & Time (↓) \\
\midrule
\texttt{Arrow} & \textbf{0.59 (0.00)} & \textbf{0.56 (0.00)} & \textbf{0.56 (0.00)} & \textbf{0.01 (0.00)} \\
\texttt{AVICI} & 0.73 (0.01) & 0.46 (0.01) & 0.47 (0.01) & \textbf{0.01 (0.00)} \\
\texttt{BCNP}  & 3.61 (0.03) & 0.10 (0.00) & 0.12 (0.00) & 0.06 (0.00) \\
\texttt{SEA}   & 2.23 (0.02) & 0.27 (0.00) & 0.24 (0.00) & 6.59 (0.01) \\
\texttt{DAGMA} & 0.70 (0.01) & 0.49 (0.00) & 0.27 (0.00) & 4.66 (0.06) \\
\texttt{SDCD}  & 2.13 (0.02) & 0.33 (0.00) & 0.15 (0.00) & 11.22 (0.10) \\
\bottomrule
\end{tabular}
\end{minipage}
\hfill
\begin{minipage}[t]{0.48\textwidth}
\centering
\vspace{0.3em}MAGIC-NIAB ($n=100$, $p=44$, $s=66$) \par\vspace{0.3em}
\begin{tabular}{lp{0.47in}p{0.47in}p{0.47in}p{0.47in}}
\toprule
 & nSHD (↓) & F1 (↑) & AP (↑) & Time (↓) \\
\midrule
\texttt{Arrow} & \textbf{0.98 (0.00)} & 0.08 (0.00) & \textbf{0.18 (0.00)} & \textbf{0.01 (0.00)} \\
\texttt{AVICI} & 1.19 (0.01) & 0.07 (0.00) & 0.07 (0.00) & \textbf{0.01 (0.00)} \\
\texttt{BCNP}  & 2.79 (0.02) & 0.05 (0.00) & 0.04 (0.00) & 0.03 (0.00) \\
\texttt{SEA}   & 1.04 (0.01) & 0.14 (0.00) & 0.15 (0.00) & 4.08 (0.01) \\
\texttt{DAGMA} & 0.99 (0.00) & 0.09 (0.00) & 0.05 (0.00) & 2.30 (0.02) \\
\texttt{SDCD}  & 1.68 (0.04) & \textbf{0.15 (0.00)} & 0.05 (0.00) & 7.99 (0.13) \\
\bottomrule
\end{tabular}
\end{minipage}

\medskip

\begin{minipage}[t]{0.49\textwidth}
\centering
\vspace{0.3em}MAGIC-IRRI ($n=100$, $p=64$, $s=102$) \par\vspace{0.3em}
\begin{tabular}{lp{0.47in}p{0.47in}p{0.47in}p{0.47in}}
\toprule
 & nSHD (↓) & F1 (↑) & AP (↑) & Time (↓) \\
\midrule
\texttt{Arrow} & \textbf{0.92 (0.00)} & 0.14 (0.00) & \textbf{0.23 (0.00)} & \textbf{0.01 (0.00)} \\
\texttt{AVICI} & 1.03 (0.00) & 0.09 (0.00) & 0.08 (0.00) & \textbf{0.01 (0.00)} \\
\texttt{BCNP}  & 3.60 (0.02) & 0.04 (0.00) & 0.03 (0.00) & 0.07 (0.00) \\
\texttt{SEA}   & 1.50 (0.02) & 0.12 (0.00) & 0.08 (0.00) & 4.50 (0.03) \\
\texttt{DAGMA} & 1.01 (0.00) & 0.13 (0.00) & 0.05 (0.00) & 5.19 (0.04) \\
\texttt{SDCD}  & 2.65 (0.04) & \textbf{0.17 (0.00)} & 0.05 (0.00) & 12.09 (0.10) \\
\bottomrule
\end{tabular}
\end{minipage}
\hfill
\begin{minipage}[t]{0.49\textwidth}
\centering
\vspace{0.3em}ARTH150 ($n=100$, $p=107$, $s=150$) \par\vspace{0.3em}
\begin{tabular}{lp{0.47in}p{0.47in}p{0.47in}p{0.47in}}
\toprule
 & nSHD (↓) & F1 (↑) & AP (↑) & Time (↓) \\
\midrule
\texttt{Arrow} & \textbf{0.78 (0.00)} & \textbf{0.37 (0.00)} & \textbf{0.36 (0.00)} & \textbf{0.01 (0.00)} \\
\texttt{AVICI} & 0.94 (0.00) & 0.14 (0.00) & 0.15 (0.00) & \textbf{0.01 (0.00)} \\
\texttt{BCNP}  & 6.39 (0.05) & 0.03 (0.00) & 0.02 (0.00) & 0.07 (0.00) \\
\texttt{SEA}   & 0.99 (0.00) & 0.03 (0.00) & 0.33 (0.00) & 5.19 (0.01) \\
\texttt{DAGMA} & 0.92 (0.01) & \textbf{0.37 (0.00)} & 0.15 (0.00) & 17.62 (0.21) \\
\texttt{SDCD}  & 3.46 (0.05) & 0.21 (0.00) & 0.07 (0.00) & 16.93 (0.15) \\
\bottomrule
\end{tabular}
\end{minipage}
\endgroup
\end{table}

\section{Concluding remarks}
\label{sec:concluding}

Our paper demonstrates that causal discovery can be framed as foundation model pretraining over synthetic observational data, rather than only as a problem that must be re-solved from scratch for each new dataset. The skeleton--order factorization provides a tractable way to enforce acyclicity, and the resulting training objective makes large-scale supervised pretraining practical. Together, these ingredients make it possible to learn a single model that applies across a wide range of graph families, functional forms, noise models, and dataset shapes. This perspective opens several natural directions for future work. The most immediate ones are extending the framework to settings with latent confounding and interventional data, and scaling to larger datasets than those studied here.

\bibliographystyle{plainnat}
\bibliography{library}

\appendix

\section{Details for Table~\ref{tab:capabilities}}
\label{app:capabilities}
In this section, we clarify the criteria for comparing the pretrained models in Table~\ref{tab:capabilities}.

\begin{itemize}
\item \textbf{Zero-shot inference} means graph prediction without task-specific search, training, or optimization, using only a single forward pass of the network.
\item \textbf{DAG guarantee} means that acyclicity is intrinsically guaranteed without postprocessing at inference time. 
\item \textbf{Pretraining diversity} qualitatively rates the breadth of graph families, functional forms, noise models, and dataset shapes in the pretraining data generation. The details are provided in Table~\ref{tab:pretraining_diversity}. A cross indicates that the corresponding axis is fixed or not substantially varied in the reported pretraining setup. It does not, however, imply that the method is deficient for its original objective. Rather, the comparison is designed to identify a remaining gap in the literature: existing pretrained causal discovery models typically cover only a subset of these axes, whereas a broadly reusable zero-shot model benefits from pretraining over all four.
\end{itemize}

\begin{table}[ht]
\centering
\footnotesize
\caption{Comparison of pretraining data diversity in pretrained causal discovery models. We assess whether each method varies four dimensions of its pretraining data-generating process: graph family, functional form, noise model, and dataset shape. The overall diversity rating is determined by the number of varied dimensions: Very high = 4, High = 3, Moderate = 2, and Low = 1. For dataset shape, we consider that a method ticks the box if its pretraining varies both $n$ and $p$. Here we do not consider the diversity of the data that a model does inference on.}
\setlength{\tabcolsep}{4pt}
\label{tab:pretraining_diversity}
\begin{tabular}{@{}lccccc@{}}
\toprule
& Graph family
& Functional form
& Noise model
& Dataset shape
& Rating \\
\midrule
\texttt{AVICI} \citep{Lorch2022}
& \cmark & \cmark & \cmark & \xmark & High \\

\texttt{CSIvA} \citep{Ke2023}
& \xmark & \cmark & \xmark & \xmark & Low \\

\texttt{BCNP} \citep{Dhir2025}
& \xmark & \cmark & \xmark & \xmark & Low \\

\texttt{SEA} \citep{Wu2025}
& \cmark & \cmark & \xmark & \cmark & High \\

\texttt{ADAG} \citep{Yin2025}
& \xmark & \xmark & \xmark & \cmark & Low \\

\texttt{CauScale} \citep{Peng2026}
& \cmark & \cmark & \xmark & \xmark & Moderate \\

\texttt{Arrow} (ours)
& \cmark & \cmark & \cmark & \cmark & Very high \\

\bottomrule
\end{tabular}
\end{table}

Under the definitions of the criteria, the rationale for Table~\ref{tab:capabilities} is as follows.

\begin{itemize}
\item \texttt{AVICI} \citep{Lorch2022}

\xmark{} DAG guarantee: Regularizes/enforces acyclicity during training, but does not produce graphs at inference time through an intrinsically acyclicity-preserving mechanism.
\xmark{} Dataset shape: $n$ is fixed to 200 for training, although $p$ is varied.

\item \texttt{CSIvA} \citep{Ke2023}
\xmark{} Graph family: The pretraining graphs are sampled from a single Erdős--Rényi-style random DAG family.
\xmark{} DAG guarantee: Acyclicity is not enforced.
\xmark{} Noise model: Only Gaussian noise is used for training.
\xmark{} Dataset shape: With $p$ varied, the main experiments fix $n=1500$ for training while there are ablation studies varying sample count.

\item \texttt{BCNP} \citep{Dhir2025}

\xmark{} Graph family: Only Erdős--Rényi graphs are used for training. 
\xmark{} Noise model: Only Gaussian noise with varying scales/hyperparameters is used for training but it does not vary across distinct noise families.
\xmark{} Dataset shape: $n$ is fixed to $1000$ and $p$ is fixed to $20$ for training.

\item \texttt{SEA} \citep{Wu2025}

\xmark{} Zero-shot inference: Not under our definition, because SEA first computes task-specific summary statistics and classical causal discovery estimates on subsets of the target data before aggregation.
\xmark{} DAG guarantee: The inference-time aggregator outputs edge scores and the paper reports that predictions are highly, but not perfectly, acyclic. 
\xmark{} Noise model: Training uses Gaussian noise; non-Gaussian noise is considered as an out-of-distribution evaluation setting rather than as part of the training noise distribution.

\item \texttt{ADAG} \citep{Yin2025}

\xmark{} DAG guarantee: Although acyclicity is enforced through an optimization constraint during training, inference applies the learned map and thresholds the weighted adjacency matrix, which can contain cycles. 
\xmark{} Graph family: Only Erdős--Rényi graphs are used for training.
\xmark{} Functional form: Only linear SEMs are used.
\xmark{} Noise model: The main pretraining/evaluation setup uses Gaussian noise; non-Gaussian noise is considered only in a separate ablation rather than as part of a diverse pretraining mixture.

\item \texttt{CauScale} \citep{Peng2026}

\xmark{} DAG guarantee: Acyclicity is not enforced. 
\xmark{} Noise model: Only Gaussian noise is used for training. 
\xmark{} Dataset shape: With $p$ varied, $n$ is fixed to $1000$ for synthetic data. A separate model is trained on the SERGIO GRN data with $n=5000$ and max $p=200$. Under our criterion, we do not count it as varying dataset shape.

\item \texttt{Arrow}

\cmark{} Zero-shot inference: Inference is a single forward pass of one universal model for all experiments.
\cmark{} DAG guarantee: The skeleton--order factorization orients all selected skeleton edges according to one topological order at inference time.
\cmark{} Graph family, functional form, noise model, and dataset shape: The pretraining distribution explicitly covers multiple graphs, SEM mechanisms, noise distributions, and varying $(n,p)$.
\end{itemize}

\section{Likelihood details}

\subsection{Skeleton--order representation}
\label{app:representation}

\begin{proof}[Proof of Proposition~\ref{prop:representation}]
If $\pi\in\operatorname{Top}(G^\star)$, then every directed edge of $G^\star$ points from an earlier node in $\pi$ to a later one. Since $A^\star$ records exactly the adjacencies of $G^\star$, the masking operation $A^\star\odot M(\pi)$ orients each skeleton edge according to $\pi$ and therefore reproduces $G^\star$. Thus, $f(A^\star,\pi)=G^\star$ for every $\pi\in\operatorname{Top}(G^\star)$.

Conversely, suppose $f(A,\pi)=G^\star$. Because $f(A,\pi)$ is obtained by orienting the undirected edges of $A$ according to $\pi$, the directed graph $f(A,\pi)$ has the same undirected skeleton as $A$. Therefore the skeleton of $G^\star$ must equal $A$, so $A=A^\star$. Moreover, every directed edge of $G^\star$ must point from an earlier node in $\pi$ to a later one, which means $\pi$ is a topological order of $G^\star$. Hence, $\pi\in\operatorname{Top}(G^\star)$.
\end{proof}

\subsection{Composite likelihood theory}
\label{app:composite-theory}

Here, we prove Theorem~\ref{thm:edge-marginal-identifiability} and then characterize the population target of the composite likelihood. The first result shows that, within the skeleton--order model class, directed edge marginals identify the full latent distribution. The second shows that the composite likelihood targets these directed edge marginals at the population level.

We first prove the identifiability result stated in Theorem~\ref{thm:edge-marginal-identifiability}. Recall that, for fixed $X$, the directed edge marginals satisfy
\begin{equation*}
r_{jk}:=\Pr_\theta(G_{jk}=1\mid X)=\nu_{jk}\sigma(s_j-s_k),\qquad j\neq k,
\end{equation*}
with $\nu_{jk}=\nu_{kj}\in(0,1)$.

\begin{proof}[Proof of Theorem~\ref{thm:edge-marginal-identifiability}]
For $j\neq k$, let
\begin{equation*}
q_{jk}=\sigma(s_j-s_k).
\end{equation*}
Using the identity $\sigma(x) + \sigma(-x) = 1$,
\begin{equation*}
q_{jk}+q_{kj}=1,
\end{equation*}
we have
\begin{equation*}
r_{jk}+r_{kj}=\nu_{jk}q_{jk}+\nu_{jk}q_{kj}=\nu_{jk}.
\end{equation*}
Thus, the skeleton probability $\nu_{jk}$ is identified by the sum of the two directed marginals.

For the order scores,
\begin{equation*}
\frac{r_{jk}}{r_{kj}}=\frac{\nu_{jk}q_{jk}}{\nu_{jk}q_{kj}}=\frac{\sigma(s_j-s_k)}{\sigma(s_k-s_j)}.
\end{equation*}
Using
\begin{equation*}
\frac{\sigma(a)}{\sigma(-a)}=\exp(a),
\end{equation*}
with $a=s_j-s_k$, we obtain
\begin{equation*}
\frac{r_{jk}}{r_{kj}}=\exp(s_j-s_k),
\end{equation*}
and therefore
\begin{equation*}
s_j-s_k=\log\left(\frac{r_{jk}}{r_{kj}}\right).
\end{equation*}
The collection of all pairwise differences determines $s$ up to an additive constant. Since the Plackett--Luce distribution is invariant to adding a common constant to all scores, $p_\theta(\pi\mid X)$ is identified. Together with the identified skeleton probabilities $\nu_{jk}$, this identifies $p_\theta(A\mid X)$ and hence the full factorized latent distribution $p_\theta(A,\pi\mid X)$.
\end{proof}

Having established that directed edge marginals identify the latent skeleton and order components within the skeleton--order family, we now characterize the population target of the composite likelihood. For each ordered pair $(j,k)$ with $j\neq k$, define the predicted directed edge marginal
\begin{equation*}
r^\theta_{jk}(X):=\Pr_\theta(G_{jk}=1\mid X)=\nu^\theta_{jk}(X)\sigma\!\left(s^\theta_j(X)-s^\theta_k(X)\right).
\end{equation*}
The composite negative log-likelihood can then be written as
\begin{equation*}
\mathcal{L}(\theta)=-\sum_{j\neq k}\left[G^\star_{jk}\log r^\theta_{jk}(X)+(1-G^\star_{jk})\log(1-r^\theta_{jk}(X))\right].
\end{equation*}

\begin{theorem}[Population target of the composite likelihood]
\label{thm:composite}
Let $(X,G^\star)$ be drawn from a meta-distribution over datasets and DAGs, and define
\begin{equation*}
\eta_{jk}(X):=\Pr(G^\star_{jk}=1\mid X),\qquad j\neq k.
\end{equation*}
For any measurable directed edge predictor $r(X)=\{r_{jk}(X)\}_{j\neq k}$ with $r_{jk}(X)\in(0,1)$, define the population composite risk
\begin{equation*}
\mathcal{R}(r)=\mathbb{E}\left[-\sum_{j\neq k}\left\{G^\star_{jk}\log r_{jk}(X)+(1-G^\star_{jk})\log(1-r_{jk}(X))\right\}\right].
\end{equation*}
Then $\mathcal{R}(r)$ is uniquely minimized, pointwise in $X$, by
\begin{equation*}
r^\star_{jk}(X)=\eta_{jk}(X),\qquad j\neq k.
\end{equation*}
Moreover, for any $r$,
\begin{equation*}
\mathcal{R}(r)-\mathcal{R}(r^\star)=\mathbb{E}\left[\sum_{j\neq k}\operatorname{KL}\left(\operatorname{Bern}(\eta_{jk}(X))\,\middle\|\,\operatorname{Bern}(r_{jk}(X))\right)\right].
\end{equation*}
Consequently, within the \texttt{Arrow} parameterization
\begin{equation*}
r^\theta_{jk}(X)=\nu^\theta_{jk}(X)\sigma(s^\theta_j(X)-s^\theta_k(X)),
\end{equation*}
population training minimizes the edgewise KL projection of the true directed edge marginals onto the skeleton--order model class.
\end{theorem}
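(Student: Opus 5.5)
The plan is to condition on $X$ and reduce the population risk to a sum of independent binary cross-entropy problems, one per ordered pair. By the tower property,
\begin{equation*}
\mathcal{R}(r)=\mathbb{E}_X\left[\sum_{j\neq k}\ell_{jk}(X;r)\right],\qquad \ell_{jk}(X;r):=-\eta_{jk}(X)\log r_{jk}(X)-(1-\eta_{jk}(X))\log(1-r_{jk}(X)).
\end{equation*}
This uses that $r_{jk}(X)$ is $X$-measurable and $\mathbb{E}[G^\star_{jk}\mid X]=\eta_{jk}(X)$. Linearity of conditional expectation applies because the sum over $j\neq k$ is finite for each fixed $p$; if $p$ varies across tasks, we condition on $X$, which determines $p$. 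Each $\ell_{jk}(X;r)$ is the cross-entropy $H(\operatorname{Bern}(\eta_{jk}),\operatorname{Bern}(r_{jk}))$. The key identity is the standard decomposition
\begin{equation*}
\ell_{jk}(X;r)=H(\operatorname{Bern}(\eta_{jk}(X)))+\operatorname{KL}\left(\operatorname{Bern}(\eta_{jk}(X))\,\middle\|\,\operatorname{Bern}(r_{jk}(X))\right),
\end{equation*}
which follows by adding and subtracting $\eta\log\eta+(1-\eta)\log(1-\eta)$, with the convention $0\log 0=0$.

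Summing over pairs and integrating then gives
\begin{equation*}
\mathcal{R}(r)=\mathbb{E}\Big[\sum_{j\neq k}H(\operatorname{Bern}(\eta_{jk}))\Big]+\mathbb{E}\Big[\sum_{j\neq k}\operatorname{KL}(\cdot\|\cdot)\Big].
\end{equation*}
The first term does not depend on $r$. Setting $r=\eta$ makes every KL term vanish, and subtracting the two expressions yields the excess-risk formula. Nonnegativity of KL shows that $r^\star=\eta$ is a minimizer. Uniqueness, pointwise in $X$ up to null sets, follows because $\operatorname{KL}(\operatorname{Bern}(a)\|\operatorname{Bern}(b))=0$ if and only if $a=b$, which is Gibbs' inequality or strict convexity of $-\log$.

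The main technical wrinkle is the boundary case. The predictor $r$ is restricted to $(0,1)$, while $\eta_{jk}(X)$ may equal $0$ or $1$. In that case $r^\star=\eta$ lies outside the admissible class, and the infimum is approached but not attained. I would handle this by extending the class to allow $r_{jk}\in[0,1]$ with the usual conventions for $0\log 0$, or by assuming $\eta_{jk}(X)\in(0,1)$ almost surely. Either way the KL identity holds verbatim, and $\mathcal{R}(r^\star)$ is finite because the Bernoulli entropies are bounded by $\log 2$ per pair. I would also note that measurability of $r^\star$ holds because conditional probabilities are measurable in $X$.

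For the final claim, restrict the infimum to $r=r^\theta$ with $r^\theta_{jk}=\nu^\theta_{jk}\sigma(s^\theta_j-s^\theta_k)$. Minimizing $\mathcal{R}(r^\theta)$ over $\theta$ is then equivalent to minimizing $\mathbb{E}\sum_{j\neq k}\operatorname{KL}(\operatorname{Bern}(\eta_{jk})\|\operatorname{Bern}(r^\theta_{jk}))$, since the two objectives differ by the constant $\mathcal{R}(r^\star)$. This is exactly the edgewise KL projection of $\eta$ onto the skeleton--order class. When $\eta$ is representable in that class, the projection attains zero. Theorem~\ref{thm:edge-marginal-identifiability} then shows that the minimizer pins down $\nu$ and $s$ up to a constant, and hence the full latent distribution.
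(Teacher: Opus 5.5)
Your proposal is correct and follows essentially the same route as the paper: condition on $X$ to reduce the risk to per-pair Bernoulli cross-entropies with parameter $\eta_{jk}(X)$, identify the excess over $r^\star=\eta$ as a sum of Bernoulli KL divergences, and then restrict to the parameterization $r^\theta_{jk}=\nu^\theta_{jk}\sigma(s^\theta_j-s^\theta_k)$. Your boundary remark is a genuine refinement that the paper skips, since the paper's claim that the cross-entropy is uniquely minimized over $r\in(0,1)$ at $r=\eta_{jk}$ tacitly assumes $\eta_{jk}(X)\in(0,1)$.
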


\begin{proof}[Proof of Theorem~\ref{thm:composite}]
Condition on $X$ and write $\eta_{jk}=\eta_{jk}(X)$ and $r_{jk}=r_{jk}(X)$. The conditional population risk is
\begin{equation*}
-\sum_{j\neq k}\left[\eta_{jk}\log(r_{jk})+(1-\eta_{jk})\log(1-r_{jk})\right],
\end{equation*}
because $\eta_{jk}=\mathbb{E}[G^\star_{jk}\mid X]$.

For each ordered pair $(j,k)$, the function
\begin{equation*}
r\mapsto-\eta_{jk}\log(r)-(1-\eta_{jk})\log(1-r)
\end{equation*}
is the Bernoulli cross-entropy. It is uniquely minimized over $r\in(0,1)$ at $r=\eta_{jk}$. Hence, the conditional risk is uniquely minimized, pointwise in $X$, by
\begin{equation*}
r^\star_{jk}(X)=\eta_{jk}(X).
\end{equation*}

For the excess-risk identity, subtract the conditional risk at $r^\star$. For each ordered pair,
\begin{equation*}
-\eta_{jk}\log(r_{jk})-(1-\eta_{jk})\log(1-r_{jk})+\eta_{jk}\log(\eta_{jk})+(1-\eta_{jk})\log(1-\eta_{jk})
\end{equation*}
equals
\begin{equation*}
\operatorname{KL}\left(\operatorname{Bern}(\eta_{jk})\,\middle\|\,\operatorname{Bern}(r_{jk})\right).
\end{equation*}
Summing over ordered pairs and taking expectation over $X$ gives
\begin{equation*}
\mathcal{R}(r)-\mathcal{R}(r^\star)=\mathbb{E}\left[\sum_{j\neq k}\operatorname{KL}\left(\operatorname{Bern}(\eta_{jk}(X))\,\middle\|\,\operatorname{Bern}(r_{jk}(X))\right)\right].
\end{equation*}
The final statement follows by restricting $r$ to the \texttt{Arrow} model class
\begin{equation*}
r^\theta_{jk}(X)=\nu^\theta_{jk}(X)\sigma(s^\theta_j(X)-s^\theta_k(X)).
\end{equation*}
\end{proof}

Theorem~\ref{thm:composite} shows that the composite likelihood is a proper scoring rule for the posterior directed edge marginals. Combining this population-risk result with Theorem~\ref{thm:edge-marginal-identifiability} gives the following consistency statement under correct specification.

\begin{corollary}[Population consistency under correct specification]
\label{cor:population-consistency}
Suppose the true conditional distribution over DAGs belongs to the \texttt{Arrow} skeleton--order family, so that for almost every $X$ there exist $\nu^\star_{jk}(X)$ and $s^\star_j(X)$ satisfying
\begin{equation*}
\Pr(G^\star_{jk}=1\mid X)=\nu^\star_{jk}(X)\sigma(s^\star_j(X)-s^\star_k(X)).
\end{equation*}
Then any population minimizer of the composite negative log-likelihood recovers $\nu^\star_{jk}(X)$ and the score differences
\begin{equation*}
s^\star_j(X)-s^\star_k(X)
\end{equation*}
for all $j\neq k$, almost surely. Consequently, it recovers the full latent distribution
\begin{equation*}
p^\star(A,\pi\mid X)=p^\star(A\mid X)p^\star(\pi\mid X).
\end{equation*}
\end{corollary}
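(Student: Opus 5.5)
The plan is to chain Theorem~\ref{thm:composite} with Theorem~\ref{thm:edge-marginal-identifiability}.

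\textbf{Step 1: the population minimizer matches the true marginals.} Under correct specification, the true directed edge marginals $\eta_{jk}(X)=\nu^\star_{jk}(X)\sigma(s^\star_j(X)-s^\star_k(X))$ lie in the \texttt{Arrow} model class. So the unrestricted minimizer $r^\star=\eta$ from Theorem~\ref{thm:composite} is attainable within the class.

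Let $\hat\theta$ be any population minimizer over the class. Then
\begin{equation*}
\mathcal{R}(r^{\hat\theta})=\mathcal{R}(r^\star).
\end{equation*}
The excess-risk identity therefore gives
\begin{equation*}
\mathbb{E}\Bigl[\textstyle\sum_{j\neq k}\operatorname{KL}\bigl(\operatorname{Bern}(\eta_{jk}(X))\,\|\,\operatorname{Bern}(r^{\hat\theta}_{jk}(X))\bigr)\Bigr]=0.
\end{equation*}
Each KL term is nonnegative and vanishes only when its two arguments coincide. Hence
\begin{equation*}
r^{\hat\theta}_{jk}(X)=\eta_{jk}(X)\quad\text{for all } j\neq k,\ \text{almost surely}.
\end{equation*}
Strictly, the unique minimization already holds pointwise in $X$ by the theorem. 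The a.s.\ qualifier comes from the fact that minimizing an expectation only pins down the minimizer up to null sets.

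\textbf{Step 2: apply identifiability.} Fix such an $X$. The common directed edge marginals $r_{jk}$ are generated by two skeleton--order parameterizations: $(\nu^{\hat\theta},s^{\hat\theta})$ and $(\nu^\star,s^\star)$. Theorem~\ref{thm:edge-marginal-identifiability} expresses both parameterizations through the same functions of $r$:
\begin{equation*}
\nu_{jk}=r_{jk}+r_{kj},\qquad s_j-s_k=\log(r_{jk}/r_{kj}).
\end{equation*}
It follows that
\begin{equation*}
\nu^{\hat\theta}_{jk}(X)=\nu^\star_{jk}(X),\qquad s^{\hat\theta}_j(X)-s^{\hat\theta}_k(X)=s^\star_j(X)-s^\star_k(X).
\end{equation*}
The log-ratio is well defined because $\nu\in(0,1)$ and $\sigma\in(0,1)$, so $r_{kj}>0$.

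\textbf{Step 3: recover the full latent distribution.} The scores agree up to an additive constant, and Plackett--Luce is shift invariant, so $p_{\hat\theta}(\pi\mid X)=p^\star(\pi\mid X)$. The product-Bernoulli skeleton distributions also agree. Taking products gives equality of the joint latent distribution. Everything holds off a null set, and a countable (finite) union over pairs $j\neq k$ remains null.

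\textbf{Main obstacle.} The delicate point is Step 1's measure-theoretic bookkeeping. Equality of expected risk must give a.s.\ equality of the marginals, which requires $\mathcal{R}(r^\star)<\infty$. This finiteness holds because each Bernoulli cross-entropy term is bounded by $\log 2$ at $r=\eta$. I would state this explicitly and then invoke nonnegativity of KL.
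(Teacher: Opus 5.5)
Your proposal is correct and follows the same route as the paper's proof. Theorem~\ref{thm:composite} forces the minimizer's directed edge marginals to equal the true ones, Theorem~\ref{thm:edge-marginal-identifiability} applied pointwise in $X$ then recovers $\nu^\star_{jk}$ and the score differences, and the shift invariance of Plackett--Luce gives the full latent distribution. Your extra bookkeeping (finiteness of $\mathcal{R}(r^\star)$ via the $\log 2$ entropy bound, and the step from zero expected KL to almost-sure equality) just makes explicit what the paper leaves implicit in its one-line appeal to Theorem~\ref{thm:composite}.
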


\begin{proof}[Proof of Corollary~\ref{cor:population-consistency}]
By Theorem~\ref{thm:composite}, any population minimizer of the composite negative log-likelihood recovers the true directed edge marginals
\begin{equation*}
r^\star_{jk}(X)=\Pr(G^\star_{jk}=1\mid X).
\end{equation*}
Under the assumed skeleton--order specification, these marginals satisfy
\begin{equation*}
r^\star_{jk}(X)=\nu^\star_{jk}(X)\sigma(s^\star_j(X)-s^\star_k(X)).
\end{equation*}
Applying Theorem~\ref{thm:edge-marginal-identifiability} pointwise in $X$ recovers $\nu^\star_{jk}(X)$ and all score differences
\begin{equation*}
s^\star_j(X)-s^\star_k(X).
\end{equation*}
These determine the skeleton distribution and the Plackett--Luce order distribution, with the order scores identified up to an irrelevant additive constant. Hence, the full factorized latent distribution is recovered.
\end{proof}

Corollary~\ref{cor:population-consistency} makes explicit the role of model specification. The result does not claim recovery of arbitrary DAG posteriors. Rather, it states that if the conditional graph law is representable by the skeleton--order family, then no information relevant to that factorized law is lost by replacing the exact likelihood with the directed edge composite likelihood.

\subsection{Composite likelihood interpretation}
\label{app:composite-interpretation}

The composite likelihood replaces the exact DAG likelihood with a product of directed edge marginals. In a general structured prediction setting, such a replacement is often viewed as a simplification that discards global structure, since pairwise marginals do not typically determine the full structured distribution. However, the situation is different for the skeleton--order parameterization used by \texttt{Arrow}. For each unordered pair $\{j,k\}$, the model predicts two directed edge probabilities
\begin{equation*}
r_{jk}=\Pr(G_{jk}=1\mid X),\qquad r_{kj}=\Pr(G_{kj}=1\mid X).
\end{equation*}
Under the factorization $r_{jk}=\nu_{jk}\sigma(s_j-s_k)$ these two quantities encode two distinct components of the latent model:
\begin{enumerate}
\item Their sum recovers the skeleton probability
\begin{equation*}
r_{jk}+r_{kj}=\nu_{jk}.
\end{equation*}
\item Their log-ratio recovers the difference in order scores
\begin{equation*}
\log\left(\frac{r_{jk}}{r_{kj}}\right)=s_j-s_k.
\end{equation*}
\end{enumerate}
Thus, the pair $(r_{jk},r_{kj})$ decomposes into a symmetric component (the skeleton strength) and an antisymmetric component (the directional preference). This decomposition implies that the directed edge marginals are sufficient to reconstruct both latent components of the \texttt{Arrow} model:
\begin{enumerate}
\item The product-Bernoulli skeleton distribution $p_\theta(A\mid X)$ via $\nu_{jk}$.
\item The Plackett--Luce order distribution $p_\theta(\pi\mid X)$ via the score differences $s_j-s_k$.
\end{enumerate}
Since the joint latent distribution factorizes as
\begin{equation*}
p_\theta(A,\pi\mid X)=p_\theta(A\mid X)p_\theta(\pi\mid X),
\end{equation*}
the directed edge marginals determine the full factorized latent distribution, up to the additive shift invariance of the order scores.

In general DAG models, pairwise edge marginals do not determine the joint distribution over graphs, nor do they uniquely specify global structure due to acyclicity and higher-order dependencies. The skeleton--order factorization imposes a special structure in which each pair of directed edge probabilities contains both the undirected connectivity information and the directional ordering information. As a result, the composite likelihood does not discard information relevant to the model class, despite operating on pairwise quantities.

This observation has several important implications:
\begin{itemize}
\item The composite likelihood is not merely a heuristic approximation to the exact likelihood. Within the skeleton--order model class, it targets quantities that are sufficient to identify the latent distribution.
\item The two-head design of \texttt{Arrow} aligns with the decomposition of the directed edge marginals: the skeleton head captures the symmetric component $r_{jk}+r_{kj}$, while the order head captures the antisymmetric component $\log(r_{jk}/r_{kj})$.
\item If the true conditional distribution over DAGs belongs to the skeleton--order family and optimization is ideal, minimizing the composite loss recovers the correct skeleton probabilities and order-score differences, even though the exact DAG likelihood is not optimized.
\item Unlike generic composite likelihood methods, which may lose global dependencies, the directed edge composite likelihood preserves all information needed to reconstruct the latent skeleton--order model.
\item The result does not imply recovery of arbitrary DAG posteriors. It holds under correct specification of the skeleton--order model class. Outside this class, the composite likelihood learns the closest directed edge marginals representable within the model.
\end{itemize}

Taken together, these points show that the composite likelihood is well aligned with the structure of the \texttt{Arrow} model: it reduces computational complexity while retaining the information necessary to identify the latent factorization that generates DAGs.

\subsection{Misspecification and global consistency}
\label{app:misspecification}

Having established why the directed edge composite likelihood is justified within the skeleton--order model class, we now characterize that model class directly in terms of directed edge marginals and explain what happens under misspecification.

For each ordered pair $(j,k)$ with $j\neq k$, define the directed edge marginal
\begin{equation*}
r_{jk}:=\Pr_\theta(G_{jk}=1\mid X),
\end{equation*}
and the corresponding directional log-ratio
\begin{equation*}
\rho_{jk}:=\log\left(\frac{r_{jk}}{r_{kj}}\right).
\end{equation*}
Under the \texttt{Arrow} factorization,
\begin{equation*}
r_{jk}=\nu_{jk}\sigma(s_j-s_k),
\end{equation*}
where $\nu_{jk}=\nu_{kj}$ and $\sigma(\cdot)$ is the logistic sigmoid. The symmetry of the skeleton probabilities implies that the directional log-ratios depend only on score differences:
\begin{equation*}
\rho_{jk}=\log\left(\frac{\nu_{jk}\sigma(s_j-s_k)}{\nu_{jk}\sigma(s_k-s_j)}\right)=s_j-s_k.
\end{equation*}
Thus, although \texttt{Arrow} uses a distribution over orderings, its pairwise ordering preferences are governed by a single global score vector $s=(s_1,\dots,s_p)$. This structure imposes consistency constraints across triples of variables. For any distinct nodes $i,j,k$,
\begin{equation*}
\rho_{ij}+\rho_{jk}+\rho_{ki}=(s_i-s_j)+(s_j-s_k)+(s_k-s_i)=0.
\end{equation*}
Intuitively, if node $i$ tends to precede node $j$, and node $j$ tends to precede node $k$, then the preference between $i$ and $k$ must be compatible with the same global score system. \texttt{Arrow} can represent uncertainty over orderings, but not arbitrary cyclic inconsistencies in pairwise directional preferences.

The following proposition shows that this consistency condition completely characterizes the \texttt{Arrow} model class at the level of directed edge marginals.

\begin{proposition}[Global consistency characterization]
\label{prop:curl-free}
Fix $X$ and let $\{r_{jk}\}_{j\neq k}$ be directed edge marginals satisfying
\begin{equation*}
r_{jk}>0,\qquad r_{jk}+r_{kj}<1,\qquad j\neq k.
\end{equation*}
Define
\begin{equation*}
\rho_{jk}:=\log\left(\frac{r_{jk}}{r_{kj}}\right).
\end{equation*}
Then the following statements are equivalent:
\begin{enumerate}
\item There exist skeleton probabilities $\nu_{jk}=\nu_{kj}\in(0,1)$ and node scores $s_1,\dots,s_p$ such that
\begin{equation*}
r_{jk}=\nu_{jk}\sigma(s_j-s_k),\qquad j\neq k.
\end{equation*}
\item For every triple of distinct nodes $(i,j,k)$,
\begin{equation*}
\rho_{ij}+\rho_{jk}+\rho_{ki}=0.
\end{equation*}
\end{enumerate}
\end{proposition}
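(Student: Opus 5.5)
The direction (1)$\Rightarrow$(2) is immediate from the computation already given before the statement. If $r_{jk}=\nu_{jk}\sigma(s_j-s_k)$ with $\nu_{jk}=\nu_{kj}$, then, exactly as in the proof of Theorem~\ref{thm:edge-marginal-identifiability}, $\rho_{jk}=\log\bigl(\sigma(s_j-s_k)/\sigma(s_k-s_j)\bigr)=s_j-s_k$. The cyclic sum over any triple then telescopes to zero. I will record this in one line.

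For (2)$\Rightarrow$(1) I will construct the scores explicitly as a potential for the antisymmetric field $\rho$.

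\emph{Antisymmetry.} First note that $\rho_{kj}=-\rho_{jk}$ by definition, which is well defined since $r_{jk},r_{kj}>0$.

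\emph{Construction of the scores.} Fix node $1$ as a reference and set $s_1=0$ and $s_j=\rho_{j1}$ for $j\neq 1$.

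\emph{Verification of $\rho_{jk}=s_j-s_k$.} This holds trivially when one of $j,k$ equals $1$, using antisymmetry. For distinct $j,k\neq 1$, apply condition (2) to the triple $(j,k,1)$: $\rho_{jk}+\rho_{k1}+\rho_{1j}=0$. This gives $\rho_{jk}=\rho_{j1}-\rho_{k1}=s_j-s_k$.

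This potential-construction step is the only place the triple condition is used. It is the main (if mild) point of the proof: it shows that curl-freeness on triangles suffices on the complete graph, so no condition on longer cycles is needed. The case $p=2$ needs no triple condition at all, and the construction still works.

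\emph{Skeleton probabilities.} Define $\nu_{jk}:=r_{jk}+r_{kj}$, mirroring the identification formula of Theorem~\ref{thm:edge-marginal-identifiability}. It is symmetric, and the hypotheses $r_{jk}>0$ and $r_{jk}+r_{kj}<1$ give $\nu_{jk}\in(0,1)$.

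\emph{Checking the factorization.} Since $r_{jk}/r_{kj}=\exp(s_j-s_k)$, we have
\begin{equation*}
\frac{r_{jk}}{r_{jk}+r_{kj}}=\frac{\exp(s_j-s_k)}{1+\exp(s_j-s_k)}=\sigma(s_j-s_k),
\end{equation*}
so $r_{jk}=\nu_{jk}\sigma(s_j-s_k)$ for all $j\neq k$, which is statement (1).

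I would close by remarking that the constructed scores are unique up to an additive constant, consistent with Theorem~\ref{thm:edge-marginal-identifiability}.
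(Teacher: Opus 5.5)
Your proposal is correct and follows the paper's proof essentially step for step: the same reference-node potential $s_j=\rho_{j1}$, the same application of the triple condition to $(j,k,1)$, and the same choice $\nu_{jk}=r_{jk}+r_{kj}$ followed by the check $r_{jk}/(r_{jk}+r_{kj})=\sigma(s_j-s_k)$. The paper leaves implicit the choice $s_1=0$, pairs that involve the reference node, and the case $p=2$; you state these explicitly.
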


\begin{proof}
Suppose first that
\begin{equation*}
r_{jk}=\nu_{jk}\sigma(s_j-s_k).
\end{equation*}
Then
\begin{equation*}
\rho_{jk}=\log\left(\frac{r_{jk}}{r_{kj}}\right)=\log\left(\frac{\nu_{jk}\sigma(s_j-s_k)}{\nu_{jk}\sigma(s_k-s_j)}\right)=s_j-s_k,
\end{equation*}
where we used $\sigma(a)/\sigma(-a)=\exp(a)$. Therefore, for any triple $(i,j,k)$,
\begin{equation*}
\rho_{ij}+\rho_{jk}+\rho_{ki}=(s_i-s_j)+(s_j-s_k)+(s_k-s_i)=0.
\end{equation*}

Conversely, suppose for every triple $(i,j,k)$ the consistency condition holds:
\begin{equation*}
\rho_{ij}+\rho_{jk}+\rho_{ki}=0.
\end{equation*}
Fix a reference node, say node $1$, and define
\begin{equation*}
s_j:=\rho_{j1},\qquad j=2,\dots,p.
\end{equation*}
Then for any pair $(j,k)$, applying the consistency condition to the triple $(j,k,1)$ gives
\begin{equation*}
\rho_{jk}+\rho_{k1}+\rho_{1j}=0.
\end{equation*}
Since $\rho_{1j}=-\rho_{j1}$, this equality implies
\begin{equation*}
\rho_{jk}=\rho_{j1}-\rho_{k1}=s_j-s_k.
\end{equation*}
Now define
\begin{equation*}
\nu_{jk}:=r_{jk}+r_{kj}.
\end{equation*}
By assumption, $\nu_{jk}\in(0,1)$ and $\nu_{jk}=\nu_{kj}$. Since
\begin{equation*}
\frac{r_{jk}}{r_{kj}}=e^{\rho_{jk}}=e^{s_j-s_k},
\end{equation*}
we have
\begin{equation*}
\frac{r_{jk}}{\nu_{jk}}=\frac{r_{jk}}{r_{jk}+r_{kj}}=\frac{\exp(s_j-s_k)}{1+\exp(s_j-s_k)}=\sigma(s_j-s_k).
\end{equation*}
Hence
\begin{equation*}
r_{jk}=\nu_{jk}\sigma(s_j-s_k),
\end{equation*}
which is the \texttt{Arrow} factorization.
\end{proof}

Proposition~\ref{prop:curl-free} gives a precise characterization of model misspecification. A collection of directed edge marginals is representable by \texttt{Arrow} if and only if its pairwise directional preferences are globally consistent with a single set of order scores. This condition is stronger than requiring individual sampled graphs to be acyclic: a distribution over valid DAGs can still induce pairwise directional marginals that are cyclic in aggregate and therefore cannot be represented by a single score vector.

When the true directed edge marginals violate the consistency condition in Proposition~\ref{prop:curl-free}, they lie outside the skeleton--order model class. In that case, the composite likelihood still has a natural interpretation, as it learns the best representable approximation to the true directed edge marginals under the edgewise KL objective of Theorem~\ref{thm:composite}. Equivalently, \texttt{Arrow} replaces arbitrary pairwise directional preferences with the closest globally coherent preferences expressible through symmetric skeleton probabilities and a single score vector. Thus, misspecification does not make the objective meaningless, but rather specifies which part of the target distribution \texttt{Arrow} can approximate.

The consistency condition in Proposition~\ref{prop:curl-free} can be viewed as a discrete integrability constraint on pairwise preferences: the log-ratio field $\rho_{jk}$ is representable as differences of a potential $s$ if and only if all cycle sums vanish. This condition is the finite-graph analogue of a curl-free (conservative) field in vector calculus. Related perspectives appear in the theory of paired comparisons and ranking models, where Bradley--Terry--Luce/Plackett--Luce models induce exactly such difference-based structures on log-odds \citep{Bradley1952,Luce1959,Plackett1975}. Connections to graph-based Hodge decompositions, which separate gradient (potential) components from cyclic inconsistencies on graphs, provide a complementary view \citep{Jiang2011}. Exploring these geometric connections for causal graph posteriors is an interesting direction for future work.

\section{Architecture details}
\label{app:model}

\subsection{Configuration}

\texttt{Arrow} uses variable embedding dimension $d=512$. The transformer modules all share the same configuration, each consisting of 3 transformer blocks with residual connections, 8 attention heads, feedforward dimension $4d$, and layer normalization \citep{Ba2016}. For the variable transformer, we use $m=16$ summary tokens per variable. For the skeleton head MLP, we use one hidden layer with dimension $2d$. GELU activations \citep{Hendrycks2016} are used throughout the network. With this specific configuration of the network, \texttt{Arrow} has approximately $37$M trainable parameters.

\subsection{Complexity}

With the model configuration fixed, \texttt{Arrow} scales linearly in the number of observations and quadratically in the number of variables. The observation transformer attends across variables within each observation, the variable transformer uses learned summary tokens to aggregate observations for each variable, and the context transformer attends across the resulting embeddings. The skeleton head evaluates all pairs of variables, adding another quadratic term in the number of variables. The computational complexity is thus $O(np^2+p^2)$ and the memory complexity is $O(np+np^2+p^2)$.

\section{Pretraining details}
\label{app:pretraining}

\subsection{Compute environment}

Training is performed with automatic mixed precision in bfloat16 and distributed data parallelism across 32 NVIDIA H100 GPUs. Memory usage varies with both $n$ and $p$, so we use shape-aware microbatching. For each sampled shape, we use the largest microbatch size that fits in memory, accumulate gradients over microbatches of that shape, and rescale them so that each optimizer iteration matches the target batch size. Feasible microbatch sizes are computed before training begins.

\subsection{Task streaming}

\texttt{Arrow} is trained over a synthetic distribution of causal discovery tasks. The relevant objective is therefore the population risk under the task generator, rather than the empirical risk over a fixed collection of simulated datasets. Each task is a dataset--graph pair $(X,G^\star)$, and training estimates the expected composite loss under the task distribution. In this setting, repeated exposure to the same tasks primarily reduces within-task gradient variance, whereas sampling new tasks improves coverage of the training distribution over graph families, functional forms, noise models, and dataset shapes. We therefore generate fresh tasks throughout training and expose the model to each task only once.

Our streaming setup is aligned with tabular foundation models such as \texttt{TabPFN} and \texttt{TabICL}, which train on synthetic datasets from priors over data-generating processes to learn reusable inference maps \citep{Muller2022,Hollmann2023,Qu2025}. A similar perspective appears in simulation-based inference, where neural estimators are trained on simulated pairs from a generative model, and in stochastic optimization, where objectives defined as expectations are approximated with fresh samples \citep{Cranmer2020,Bottou2018}. More broadly, language-model scaling analyses show that, under fixed compute, allocating compute to additional training data rather than further optimization on the same data is often beneficial \citep{Kaplan2020,Hoffmann2022}.

\subsection{Training trajectory}

In Figure~\ref{fig:training_trajectory}, we present the trajectories of the negative log-likelihood over the course of the training iterations. As a validation set, we sample $10{,}000$ tasks from the training data-generating distribution (see Section~\ref{sec:data-generation}) and fix this set prior to the start of training. The training data are streamed during optimization, so the raw training negative log-likelihood can fluctuate and is shown as a faint blue line. A 25-iteration rolling mean is overlaid as a dark blue line, showing relatively stable progress.

\begin{figure}[ht]
\centering
\includegraphics[width=0.5\linewidth]{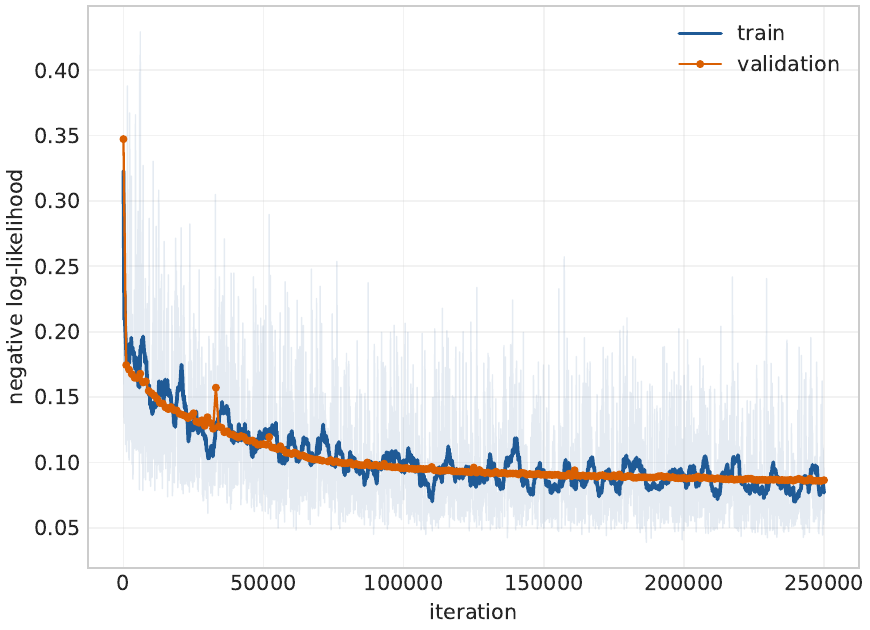}
\caption{Negative log-likelihood trajectories on the training and validation data.}
\label{fig:training_trajectory}
\end{figure}

\section{Experimental setup and additional results}

\subsection{Setup}
\label{app:setup}

Inference for the experiments is run on a machine with an AMD EPYC 9554 processor, 376GB RAM, and 4 NVIDIA RTX 5090 GPUs. Each method is allocated a single GPU or CPU core, with the experiments run in parallel across GPUs and CPU cores. Allocating multiple CPU cores to the CPU-based methods did not materially affect runtime. The experiments take roughly 10 hours to run.

We use public implementations of all baselines. The repository, version, and license are listed below.
\begin{itemize}
\item \texttt{AVICI}: \href{https://github.com/larslorch/avici}{avici}, v1.0.7, MIT License
\item \texttt{BCNP}: \href{https://github.com/Anish144/CausalStructureNeuralProcess}{CausalStructureNeuralProcess}, v0.0.1, MIT License
\item \texttt{SEA}: \href{https://github.com/rmwu/sea-reproduce}{sea-reproduce}, no version or license reported
\item \texttt{DAGMA}: \href{https://github.com/kevinsbello/dagma}{dagma}, v1.1.1, Apache-2.0 License
\item \texttt{SDCD}: \href{https://github.com/azizilab/sdcd}{sdcd}, v0.1.4, MIT License
\end{itemize}
Some benchmark datasets are constructed using external generators. Their details are listed below.
\begin{itemize}
\item Bayesian Network Repository data; \citet{Sachs2005} data: \href{https://github.com/pgmpy/pgmpy}{pgmpy}, v1.1.0, MIT License
\item \citet{Herman2025} data: \href{https://github.com/rebeccaherman1/UUMCdata}{UUMCdata}, v0.0, GPL-3.0 License
\end{itemize}

\subsection{Additional in-distribution experiments}
\label{app:in_distribution}

Figure~\ref{fig:synthetic_additional} extends the main experiments on in-distribution data by systematically isolating the remaining generator components, namely the graph family, functional form, and noise model. \texttt{Arrow} remains consistently stable and competitive across these variations, suggesting that its performance is not an artifact of any particular generator setting. Instead, the results indicate that broad pretraining diversity translates into robust zero-shot graph prediction across the full synthetic task distribution.

\begin{figure}[ht]
\centering
\includegraphics[width=\linewidth]{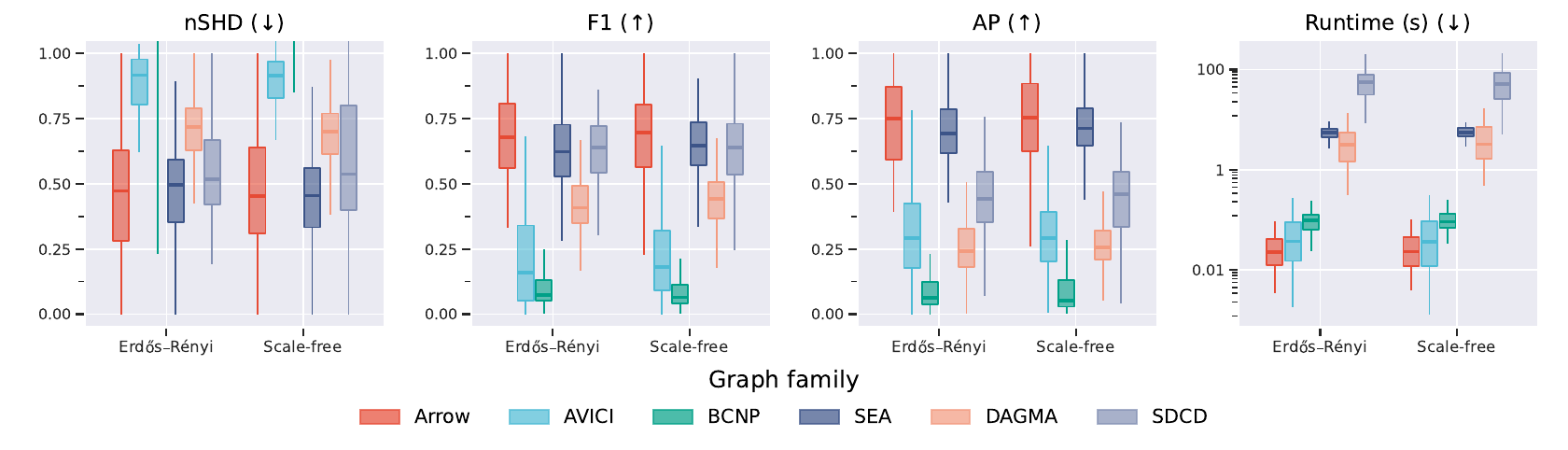}
\includegraphics[width=\linewidth]{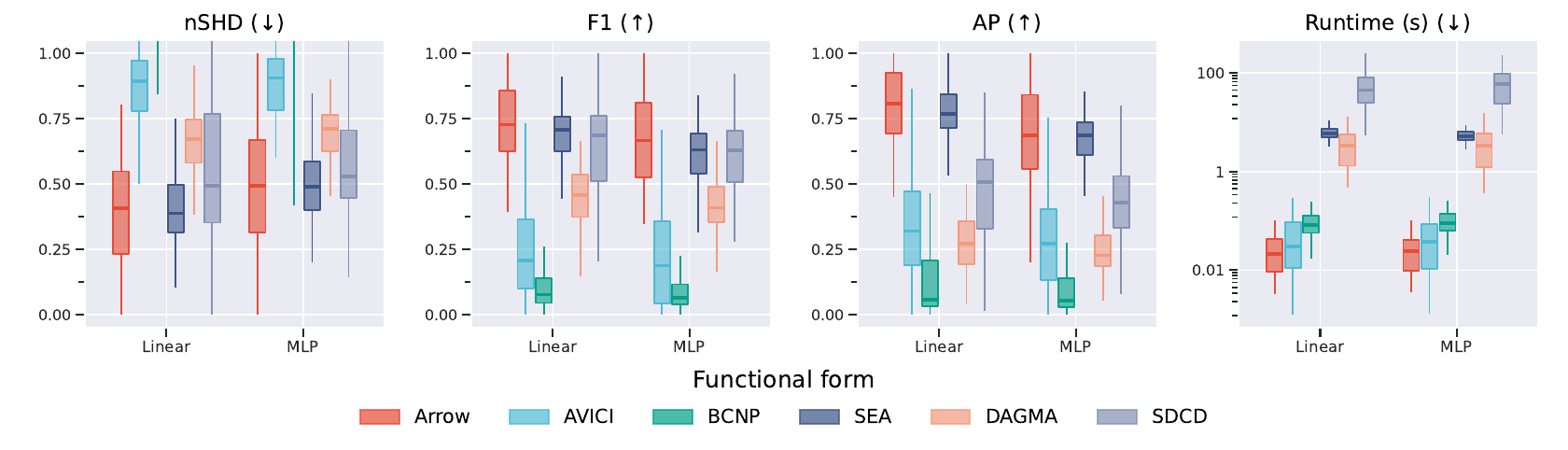}
\includegraphics[width=\linewidth]{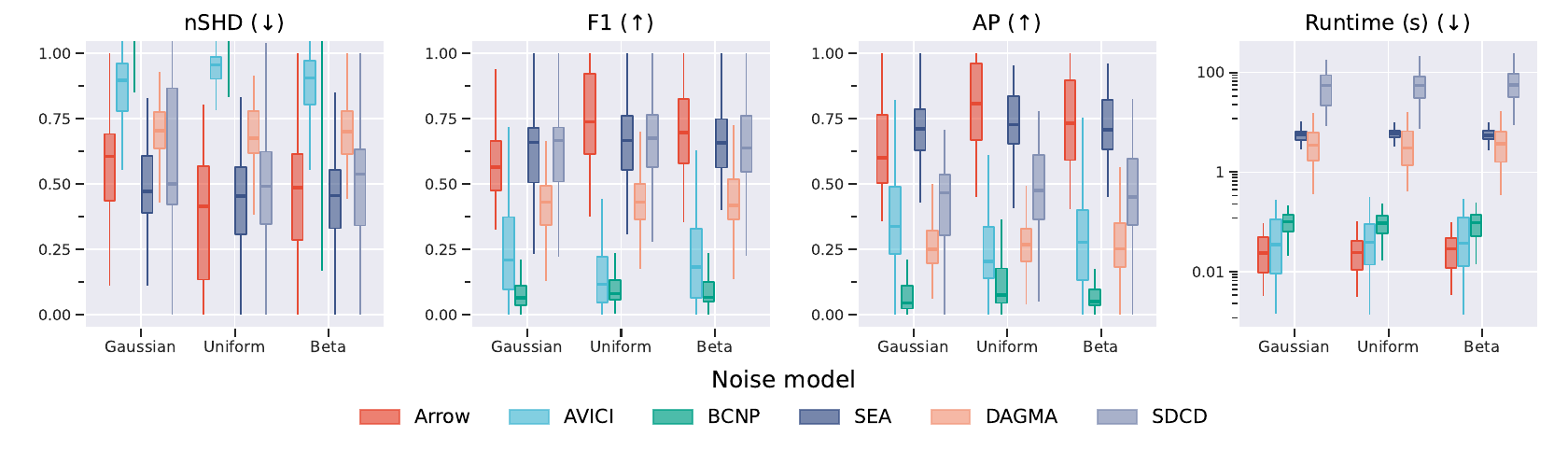}
\caption{Performance across graph families, functional forms, and noise models on in-distribution synthetic datasets. The averages (solid points) and standard errors (error bars) summarize results over 100 datasets. The nSHD axis is truncated at 2 to preserve resolution among stronger baselines.}
\label{fig:synthetic_additional}
\end{figure}

\subsection{Additional out-of-distribution experiments}
\label{app:out_of_distribution}

As an additional robustness check, we evaluate on synthetic datasets produced by the data-generating procedures of \citet{Herman2025}. This benchmark uses linear Gaussian data generated over Erdős--Rényi DAGs, but differs in lower-level choices such as coefficient scaling, noise variance, data standardization, and signal-to-noise structure. For each dataset, we sample the number of observations, number of variables, and number of edges using the same ranges as in Section~\ref{sec:data-generation}. Table~\ref{tab:herman} shows that \texttt{Arrow} performs well under these alternative data-generating specifications, often achieving the best nSHD, F1, and AP while retaining sub-second inference. These results demonstrate that its gains are not tied to the specific details of our generator, but persist across independent benchmarks.

\begin{table}[ht]
\centering
\caption{Performance on synthetic datasets from \citet{Herman2025}. The averages and standard errors summarize results over 100 datasets. Bold indicates the best result per metric, including ties whose standard error intervals overlap with the best entry.}
\label{tab:herman}
\scriptsize
\begingroup
\setlength{\tabcolsep}{4pt}
\begin{minipage}[t]{0.49\textwidth}
\centering
\vspace{0.3em}UUMC \par\vspace{0.3em}
\begin{tabular}{lp{0.47in}p{0.47in}p{0.47in}p{0.47in}}
\toprule
 & nSHD (↓) & F1 (↑) & AP (↑) & Time (↓) \\
\midrule
\texttt{Arrow} & \textbf{0.66 (0.02)} & \textbf{0.49 (0.02)} & \textbf{0.50 (0.01)} & \textbf{0.03 (0.00)} \\
\texttt{AVICI} & 0.84 (0.03) & 0.30 (0.02) & 0.39 (0.02) & 0.07 (0.01) \\
\texttt{BCNP}  & 6.58 (0.73) & 0.08 (0.01) & 0.10 (0.01) & 0.10 (0.01) \\
\texttt{SEA}   & 0.77 (0.03) & 0.45 (0.02) & 0.42 (0.02) & 5.37 (0.17) \\
\texttt{DAGMA} & 0.87 (0.02) & 0.31 (0.02) & 0.18 (0.02) & 5.10 (0.40) \\
\texttt{SDCD}  & 1.05 (0.15) & \textbf{0.47 (0.02)} & 0.30 (0.02) & 63.76 (4.33) \\
\bottomrule
\end{tabular}
\end{minipage}
\hfill
\begin{minipage}[t]{0.49\textwidth}
\centering
\vspace{0.3em}unit-variance-noise \par\vspace{0.3em}
\begin{tabular}{lp{0.47in}p{0.47in}p{0.47in}p{0.47in}}
\toprule
 & nSHD (↓) & F1 (↑) & AP (↑) & Time (↓) \\
\midrule
\texttt{Arrow} & 0.74 (0.04) & \textbf{0.46 (0.03)} & 0.44 (0.03) & \textbf{0.03 (0.00)} \\
\texttt{AVICI} & \textbf{0.67 (0.02)} & \textbf{0.47 (0.02)} & \textbf{0.59 (0.02)} & 0.07 (0.01) \\
\texttt{BCNP}  & 7.78 (1.07) & 0.08 (0.01) & 0.09 (0.01) & 0.10 (0.01) \\
\texttt{SEA}   & 1.87 (0.13) & 0.26 (0.02) & 0.21 (0.02) & 6.87 (0.25) \\
\texttt{DAGMA} & 0.96 (0.03) & 0.31 (0.02) & 0.19 (0.02) & 7.27 (0.60) \\
\texttt{SDCD}  & 1.38 (0.14) & 0.33 (0.02) & 0.19 (0.02) & 75.10 (4.90) \\
\bottomrule
\end{tabular}
\end{minipage}

\medskip

\begin{minipage}[t]{0.49\textwidth}
\centering
\vspace{0.3em}iSCM \par\vspace{0.3em}
\begin{tabular}{lp{0.47in}p{0.47in}p{0.47in}p{0.47in}}
\toprule
 & nSHD (↓) & F1 (↑) & AP (↑) & Time (↓) \\
\midrule
\texttt{Arrow} & \textbf{0.53 (0.02)} & \textbf{0.61 (0.02)} & \textbf{0.67 (0.02)} & \textbf{0.03 (0.00)} \\
\texttt{AVICI} & 0.72 (0.04) & 0.43 (0.02) & 0.56 (0.02) & 0.07 (0.01) \\
\texttt{BCNP}  & 7.13 (0.84) & 0.09 (0.01) & 0.11 (0.02) & 0.10 (0.01) \\
\texttt{SEA}   & \textbf{0.51 (0.02)} & \textbf{0.62 (0.02)} & 0.62 (0.02) & 6.19 (0.20) \\
\texttt{DAGMA} & 0.76 (0.02) & 0.45 (0.02) & 0.28 (0.02) & 5.48 (0.43) \\
\texttt{SDCD}  & 0.93 (0.13) & 0.58 (0.02) & 0.40 (0.02) & 55.36 (3.36) \\
\bottomrule
\end{tabular}
\end{minipage}
\hfill
\begin{minipage}[t]{0.49\textwidth}
\centering
\vspace{0.3em}IPA \par\vspace{0.3em}
\begin{tabular}{lp{0.47in}p{0.47in}p{0.47in}p{0.47in}}
\toprule
 & nSHD (↓) & F1 (↑) & AP (↑) & Time (↓) \\
\midrule
\texttt{Arrow} & \textbf{0.49 (0.03)} & \textbf{0.64 (0.02)} & \textbf{0.69 (0.02)} & \textbf{0.03 (0.00)} \\
\texttt{AVICI} & 0.72 (0.02) & 0.40 (0.02) & 0.55 (0.02) & 0.07 (0.01) \\
\texttt{BCNP}  & 7.26 (0.99) & 0.08 (0.01) & 0.10 (0.01) & 0.10 (0.01) \\
\texttt{SEA}   & \textbf{0.46 (0.02)} & \textbf{0.65 (0.02)} & \textbf{0.68 (0.02)} & 6.01 (0.19) \\
\texttt{DAGMA} & 0.72 (0.02) & 0.46 (0.02) & 0.29 (0.02) & 5.13 (0.40) \\
\texttt{SDCD}  & 0.80 (0.09) & 0.61 (0.02) & 0.44 (0.02) & 53.08 (3.25) \\
\bottomrule
\end{tabular}
\end{minipage}

\medskip

\begin{minipage}[t]{0.49\textwidth}
\centering
\vspace{0.3em}50-50 \par\vspace{0.3em}
\begin{tabular}{lp{0.47in}p{0.47in}p{0.47in}p{0.47in}}
\toprule
 & nSHD (↓) & F1 (↑) & AP (↑) & Time (↓) \\
\midrule
\texttt{Arrow} & 0.45 (0.02) & 0.66 (0.02) & 0.72 (0.02) & \textbf{0.03 (0.00)} \\
\texttt{AVICI} & 0.79 (0.02) & 0.31 (0.02) & 0.44 (0.02) & 0.07 (0.01) \\
\texttt{BCNP}  & 6.52 (0.81) & 0.08 (0.01) & 0.10 (0.02) & 0.10 (0.01) \\
\texttt{SEA}   & \textbf{0.31 (0.02)} & \textbf{0.75 (0.02)} & \textbf{0.82 (0.02)} & 5.56 (0.16) \\
\texttt{DAGMA} & 0.56 (0.02) & 0.53 (0.02) & 0.37 (0.02) & 4.14 (0.30) \\
\texttt{SDCD}  & 0.59 (0.10) & 0.71 (0.02) & 0.57 (0.02) & 51.76 (3.21) \\
\bottomrule
\end{tabular}
\end{minipage}
\hfill
\begin{minipage}[t]{0.49\textwidth}
\centering
\vspace{0.3em}DaO \par\vspace{0.3em}
\begin{tabular}{lp{0.47in}p{0.47in}p{0.47in}p{0.47in}}
\toprule
 & nSHD (↓) & F1 (↑) & AP (↑) & Time (↓) \\
\midrule
\texttt{Arrow} & \textbf{0.82 (0.02)} & 0.29 (0.02) & \textbf{0.34 (0.02)} & \textbf{0.03 (0.00)} \\
\texttt{AVICI} & 0.91 (0.02) & 0.19 (0.02) & 0.25 (0.02) & 0.07 (0.01) \\
\texttt{BCNP}  & 6.89 (1.13) & 0.08 (0.01) & 0.10 (0.02) & 0.11 (0.01) \\
\texttt{SEA}   & \textbf{0.86 (0.03)} & 0.30 (0.02) & \textbf{0.32 (0.02)} & 4.35 (0.13) \\
\texttt{DAGMA} & 0.92 (0.02) & 0.19 (0.02) & 0.14 (0.02) & 3.62 (0.25) \\
\texttt{SDCD}  & 1.01 (0.10) & \textbf{0.41 (0.02)} & 0.25 (0.02) & 59.89 (3.94) \\
\bottomrule
\end{tabular}
\end{minipage}
\endgroup
\end{table}

Finally, we evaluate on the flow cytometry dataset from \citet{Sachs2005}, which is commonly used as a real data benchmark for causal discovery. The dataset contains $p=11$ phosphoproteins and phospholipids measured from primary human immune cells, with predictions evaluated against the corresponding expert consensus graph with $s=20$ edges. Table~\ref{tab:sachs} demonstrates that \texttt{Arrow} transfers to this real biological setting, attaining the best nSHD, tying for the best F1 and fastest inference time, and achieving competitive AP. Together, these results show that \texttt{Arrow}'s zero-shot advantage is not confined to synthetic or semi-synthetic generators, but can carry over to real biological data.

\begin{table}[ht]
\centering
\caption{Performance on the real flow cytometry dataset from \citet{Sachs2005}. The averages and standard errors summarize results over 100 subsamples. Bold indicates the best result per metric, including ties whose standard error intervals overlap with the best entry.}
\label{tab:sachs}
\scriptsize
\begingroup
\setlength{\tabcolsep}{3.6pt}
\vspace{1em}SACHS ($p=11$, $s=20$, $n=100$) \par\vspace{0.3em}
\begin{tabular}{lp{0.47in}p{0.47in}p{0.47in}p{0.47in}}
\toprule
 & nSHD (↓) & F1 (↑) & AP (↑) & Time (↓) \\
\midrule
\texttt{Arrow} & \textbf{1.06 (0.01)} & \textbf{0.27 (0.01)} & 0.32 (0.01) & \textbf{0.01 (0.00)} \\
\texttt{AVICI} & 1.10 (0.02) & \textbf{0.27 (0.01)} & 0.30 (0.01) & \textbf{0.01 (0.00)} \\
\texttt{BCNP}  & 1.09 (0.01) & 0.19 (0.00) & \textbf{0.35 (0.01)} & 0.03 (0.00) \\
\texttt{SEA}   & 1.14 (0.01) & 0.16 (0.01) & 0.22 (0.00) & 7.55 (0.03) \\
\texttt{DAGMA} & 1.09 (0.01) & 0.13 (0.01) & 0.18 (0.00) & 0.87 (0.01) \\
\texttt{SDCD}  & 1.59 (0.02) & \textbf{0.25 (0.01)} & 0.20 (0.00) & 6.99 (0.18) \\
\bottomrule
\end{tabular}
\endgroup
\end{table}

\section{Limitations and societal impacts}
\label{app:limitations}

\texttt{Arrow} is designed for causal discovery from observational tabular data and therefore inherits the usual identifiability limitations of this setting. Its predictions are most reliable when standard observational causal discovery assumptions and the synthetic training distribution are approximately satisfied, and may degrade under latent confounding, selection bias, cyclicity, mixed data types, or substantial distribution shift. The current model is trained on datasets with up to $2000$ observations and $100$ variables, and its quadratic scaling in the number of variables may limit use on larger datasets. Although the skeleton--order factorization guarantees acyclic outputs, it does not guarantee causal correctness. These limitations motivate extensions to broader causal settings and larger datasets.

Causal discovery tools support scientific inquiry by helping researchers generate hypotheses about causal structure in numerous application domains. By training \texttt{Arrow} as a reusable pretrained model for causal discovery, such analyses may become faster and more accessible. However, as with any method that infers causal graphs from observational data, its outputs should not be interpreted as definitive causal relationships without domain knowledge and assumptions such as causal sufficiency and faithfulness. Misinterpretation of learned graphs could lead to incorrect downstream conclusions, with particularly consequential impacts in high-stakes settings such as health, policy, or resource allocation. Accordingly, \texttt{Arrow} is best used as a tool in conjunction with subject-matter expertise.

\end{document}